\title{UFT: Unifying Supervised and Reinforcement Fine-Tuning}
\author{%
Mingyang Liu, Gabriele Farina \& Asuman Ozdaglar \\
LIDS, EECS\\
Massachusetts Institute of Technology\\
Cambridge, MA 02139, USA \\
\texttt{\{liumy19,gfarina,asuman\}@mit.edu}
}
\begin{document}

\maketitle

\begin{abstract}
Post-training has demonstrated its importance in enhancing the reasoning capabilities of large language models (LLMs). The primary post-training methods can be categorized into supervised fine-tuning (SFT) and reinforcement fine-tuning (RFT). SFT is efficient and well-suited for small language models, but it may lead to overfitting and limit the reasoning abilities of larger models. In contrast, RFT generally yields better generalization but depends heavily on the strength of the base model. To address the limitations of SFT and RFT, we propose Unified Fine-Tuning (UFT), a novel post-training paradigm that unifies SFT and RFT into a single, integrated process. UFT enables the model to effectively explore solutions while incorporating informative supervision signals, bridging the gap between memorizing and thinking underlying existing methods. Notably, UFT outperforms both SFT and RFT in general, regardless of model sizes. Furthermore, we theoretically prove that UFT breaks RFT's inherent exponential sample complexity bottleneck, showing for the first time that unified training can exponentially accelerate convergence on long-horizon reasoning tasks.

\end{abstract}

\begin{CJK*}{UTF8}{gbsn}

\begin{quotation}
\begin{flushright}
“学而不思则罔，思而不学则殆。”\\
"Learning without thinking leads to confusion; thinking without learning is perilous."\\
\textemdash\textemdash\ 孔子 (Confucius)
\end{flushright}
\end{quotation}

\end{CJK*}

\footnotetext[1]{The source code is available at \href{https://github.com/liumy2010/UFT}{https://github.com/liumy2010/UFT}.}

\section{Introduction}

\begin{figure}[t]
    \centering
    \begin{overpic}[width=0.95\linewidth]{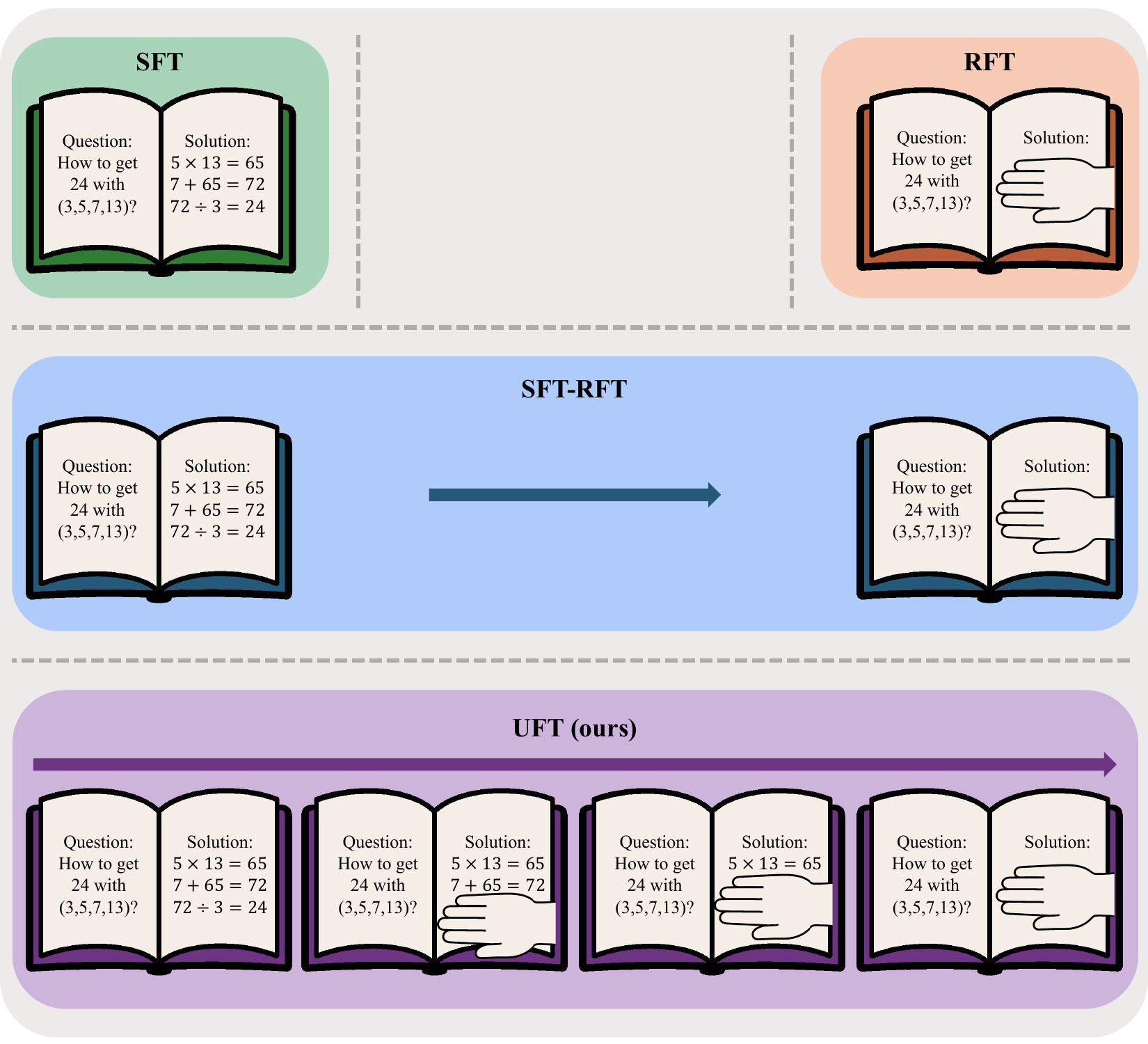}
        \put(125,242){\includegraphics[width=0.3\linewidth]{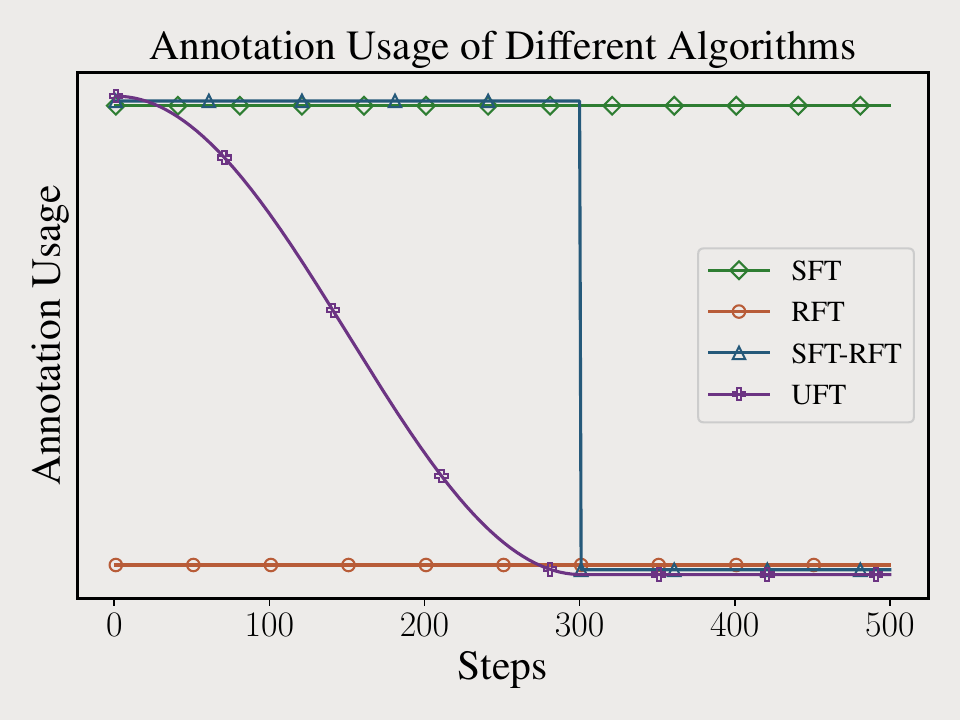}}
    \end{overpic}
    \caption{(top left, top right, middle, bottom). The illustration of SFT, RFT, SFT-RFT, and UFT, respectively. SFT-RFT refers to applying RFT after an initial SFT stage \citep{guo2025deepseek-r1,zeng2025simplerl-zoo}. (Top, center). shows the annotation usage of different algorithms over training. Curves are slightly shifted for better visibility.}
    \label{fig:introduction}
\end{figure}

\begin{figure}
    \centering
    \includegraphics[width=0.95\linewidth]{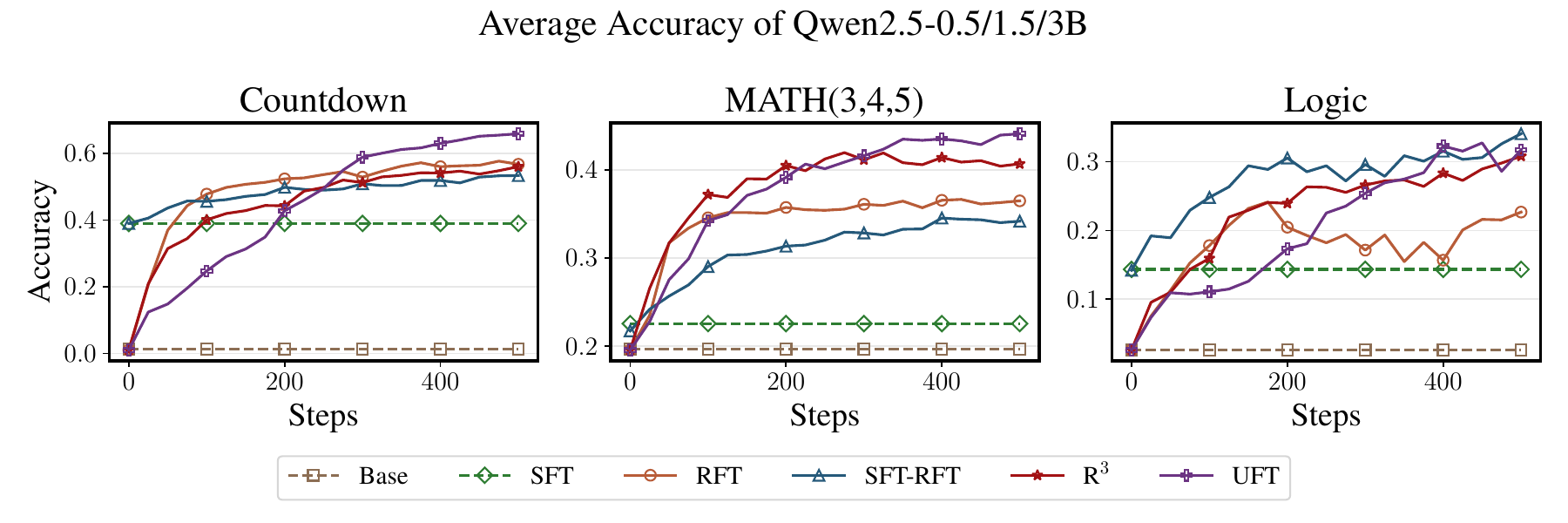}
    \caption{Presentation for different algorithms' accuracy when trained on Countdown \citep{WikipediaCountdown}, MATH(3,4,5) (level 3-5 only) \citep{hendrycksmath2021-math,zeng2025simplerl-zoo}, and the Knights and Knaves logic puzzle (Logic) \citep{xie2025logic-rl}. Accuracy is averaged over Qwen2.5 models of sizes 0.5B, 1.5B, and 3B \citep{qwen2.5}. Base refers to the model without fine-tuning, and $R^3$ is the curriculum reinforcement learning baseline \citep{xi2024training-rlhint-uniform}. The figure shows that UFT outperforms both SFT and RFT, while the relative performance of SFT and RFT varies depending on task complexity.}
    \label{fig:intro-results}
\end{figure}

When humans learn a new subject, we typically practice with problem sets (thinking) and try to understand the solutions when we encounter difficulties (memorizing). There are also counterparts in fine-tuning LLMs, which is
\begin{itemize}[nosep,left=5mm]
    \item {\bf Supervised Fine-Tuning (SFT).} Memorizing the collected reasoning trace (solution) by maximizing the log-likelihood of it.
    \item {\bf Reinforcement Fine-Tuning (RFT).} Exploring the reasoning space of LLM and improving the performance according to the signal from a verifier of the \emph{final answer} (thinking).
\end{itemize}

However, unlike humans, learning and thinking are disentangled during the training of language models. Specifically, prior work \citep{guo2025deepseek-r1,zhou2023lima-less-is-more-limo,muennighoff2025s1-simple,liu2025oatzero,zeng2025simplerl-zoo} typically applies either SFT or RFT throughout the fine-tuning phase, or applies RFT only after SFT completes (cf. \Cref{fig:introduction}). The choice of the proper fine-tuning algorithm depends on the LLM's capacity and the task's complexity. Specifically, when the LLM is weak, SFT typically works better since the LLM cannot explore the correct answer during reinforcement learning \citep{tinyzero}, due to the sparse reward caused by the verifier-based reward model. On the other hand, when the LLM is strong, RFT generalizes better \citep{xie2025logic-rl,chu2025sft-overfit}. %

To get the best of both worlds, we propose Unified Fine-Tuning (UFT), which unifies SFT and RFT and enriches the reinforcement learning signal with supervised feedback, enabling the model to acquire new knowledge during fine-tuning more efficiently. In \Cref{fig:introduction}, SFT-RFT refers to the common practice of initiating reinforcement learning from a supervised fine-tuned model, as widely adopted in the literature \citep{guo2025deepseek-r1,zeng2025simplerl-zoo}. As shown in \Cref{fig:introduction} (top left), SFT uses full annotations (solutions) throughout training, whereas RFT does not use any annotations at all (\Cref{fig:introduction}, top right). Similarly, SFT-RFT begins with SFT using full annotations, but once the RFT phase starts, it discards all annotations and relies entirely on exploration. In contrast, our method, UFT, offers a smooth transition from SFT to RFT, preserving the annotation signal early on and gradually reducing it as the model becomes capable of self-guided reasoning.

The most relevant work to UFT is Learning Reasoning through Reverse Curriculum Reinforcement Learning ($\text{R}^3$) \citep{xi2024training-rlhint-uniform}, which proposes a curriculum learning method that concatenates the problem with a slice of the solution (hint, cf. \Cref{fig:UFT-prompt} left). While $\text{R}^3$ treats hints primarily as exploration aids, UFT further integrates them as part of the supervision signal. This unification enables reinforcement learning not just to search, but to learn from existing solutions, effectively raising the performance ceiling imposed by the model's pretraining capacity (cf. \Cref{fig:intro-results}). A detailed comparison with related work is postponed to \Cref{section:related-work}.

\Cref{fig:intro-results} shows the accuracy of different algorithms over time, while the training set is Countdown \citep{WikipediaCountdown,tinyzero}, MATH(3,4,5) (levels 3–5 only) \citep{hendrycksmath2021-math,zeng2025simplerl-zoo}, and the Knights and Knaves logic puzzle (Logic) \citep{xie2025logic-rl}. Base refers to the model before fine-tuning, and $R^3$ represents the curriculum reinforcement learning baseline \citep{xi2024training-rlhint-uniform}. As shown in the figure, UFT generally outperforms all other algorithms. Furthermore, we provide the evaluation on various benchmarks, and the results are shown in \Cref{table:full-results}.

Moreover, we theoretically prove that RFT \citep{guo2025deepseek-r1,zeng2025simplerl-zoo,liu2025oatzero} suffers from an inherent sample complexity bottleneck, which is exponential in the length of the reasoning. In contrast, the unified training paradigm in UFT can improve the sample complexity to a polynomial dependence on the reasoning length, which is an exponential improvement over RFT.

\subsection{Contribution}

We state the contribution of this paper in the following.
\begin{enumerate}[left=5mm]
    \item {\bf Integration of Supervision and Reward Signal.} UFT provides a general framework that integrates the supervision from SFT and reward from RFT into a single training paradigm. UFT blends reward optimization with log-likelihood maximization on hints (partial solution), and smoothly transitions from fully supervised to fully reinforcement learning. Such integration allows models to explore and learn simultaneously, addressing the trade-off between memorization (SFT) and generalization (RFT) in a principled way.
    \item {\bf Theoretical Justification.} We provide a theoretical analysis of UFT, proving it achieves polynomial sample complexity dependence on reasoning length, compared to the exponential complexity required by standard RFT. This result formally establishes the efficiency gains from unifying learning (cf. \Cref{section:theory}).
    \item {\bf Empirical Validation Across Model Scales and Tasks.} We evaluate the algorithms by training Qwen2.5-0.5/1.5/3B \citep{qwen2.5} and Llama3.2-1/3B \citep{llama3} on Countdown \citep{WikipediaCountdown,tinyzero}, MATH \citep{hendrycksmath2021-math}, and the Knights and Knaves logic puzzle (Logic) \citep{xie2025logic-rl}. UFT consistently outperforms previous methods, showing robustness across domains and models (cf. \Cref{section:experiment}).
\end{enumerate}

\section{Preliminaries}

\paragraph{Notation.} For any integer $n>0$, let $[n]\coloneqq\cbr{1,2,\cdots,n}$ and $\Delta^n\coloneqq\big\{\bx\in [0, 1]^n\colon \sum_{i=1}^n x_i=1\big\}$ be the $n-1$-dimenional probability simplex. For any two distribution $\bx,\by\in \Delta^n$, let $\KL\rbr{\bx\|\by}\coloneqq \sum_{i=1}^n x_i\log\frac{x_i}{y_i}$ denote the KL-divergence between $\bx$ and $\by$. For any discrete set $\cS$, let $\abr{\cS}$ be its cardinality.

\begin{figure}[th]
\centering
\scalebox{0.8}{
\begin{tikzpicture}[
  level distance=1.5cm,
  level 1/.style={sibling distance=8cm},
  level 2/.style={sibling distance=4cm},
  level 3/.style={sibling distance=2cm},
  edge from parent/.style={draw,thick}
]
\node (root) {$3,5,7,13\overset{?}{=}24$}
  child { node (A) {$5\times 13=65$}
    child { node (B) {$7-3=4$}
      child { node (B1) {$4+65=69$} }
      child { node (B2) {$65-4=61$} }
    }
    child { node (C) {$7+65=72$}
      child { node (C1) {$72\div 3={\color{red}24}$} }
      child { node (C2) {$72*3=216$} }
    }
  }
  child { node (D) {$3+5=8$}
    child { node (E) {$8-7=1$}
      child { node (E1) {$13+1=14$} }
      child { node (E2) {$13\div 1=13$} }
    }
    child { node (F) {$8+13=21$}
      child { node (F1) {$21+7=28$} }
      child { node (F2) {$21\div 7=3$} }
    }
  };

\draw[green,thick] (root) -- (A);
\draw[green,thick] (A) -- (C);
\draw[green,thick] (C) -- (C1);
\end{tikzpicture}
}
\caption{
An illustration of the Countdown game, where the goal is to obtain 24 by applying basic arithmetic operations ($+,-,\times,\div$) to the numbers $(3, 5, 7, 13)$. The green path represents the correct solution.}
\label{fig:search-tree}
\end{figure}
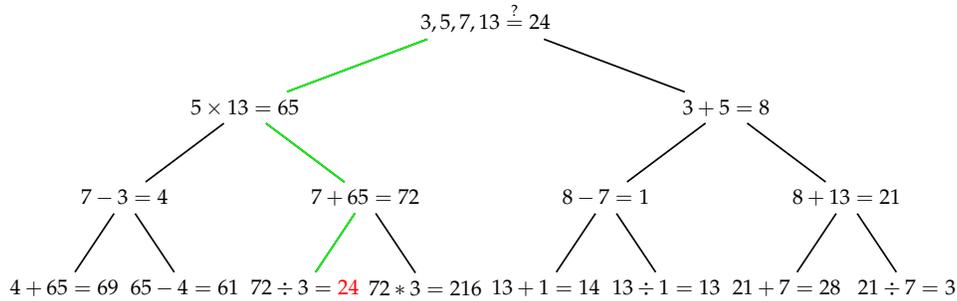

\paragraph{Search Tree.} The problem-solving process can be represented as a \emph{search tree}, as illustrated in \Cref{fig:search-tree}. Except for the leaf nodes, each node (also referred to as a \emph{state}---we use the terms node and state interchangeably) in the search tree has $B$ children, where $B$ is the \emph{branching factor}. Each child represents a different next token (or next sentence) to be generated, so a path from the root to a leaf node corresponds to a complete solution to the problem. The tree has a height of $H$, with the root at height $0$, and each node's height equal to its parent's height plus one.

Let $\cS_h$ denote the set of nodes with height $h\in \cbr{0,1,\cdots,H}$ and $\cS\coloneqq \bigcup_{h=0}^H \cS_h$. Note that $|\cS_0|=1$ since it only contains the root $s_{\rm root}$, and $|\cS_{h+1}|=B\cdot|\cS_h|$. Therefore, there are $\sum_{h=0}^H B^h=\frac{B^{H+1}-1}{B-1}$ nodes in total. Once reaching a leaf node $s\in \cS_H$, the model will receive reward $\cR(s)\in [0, 1]$. A policy can be written as $\pi\colon \bigcup_{h=0}^{H-1} \cS_h\to \Delta^B$, where $\pi(a\given s)$ is the probability of selecting the $a^{th}$ child of $s$. For any state-action pairs $(s,a)\in\cS\times[B]$, let $\cT\rbr{s,a}\in\cS$ be the child at the branch $a$ of state $s$, and $\cT\rbr{s,a}=\emptyset$ for $s\in\cS_H$. The value function of policy $\pi$ is written as $V^{\pi}\colon\cS\to [0, 1]$. We write $s_{h_0}=s, \rbr{s_h}_{h=h_0}^H\sim \pi$ as the trajectory starting from $s$ and sampled according to $\pi$, \emph{i.e.}, $a_h\sim \pi(\cdot\given s_h),s_{h+1}=\cT(s_h, a_h)$. For any $h_0\in \cbr{0,1,\cdots,H}$ and $s\in \cS_{h_0}$, we define $V^{\pi}(s)\coloneqq \EE_{s_{h_0}=s, \rbr{s_h}_{h=h_0}^H\sim \pi}\sbr{\cR\rbr{s_H}}$, which is the expected reward obtained by following policy $\pi$ starting from node $s$.

Let $\pi^* \in \argmax_{\pi} V^{\pi}(s_{\rm root})$ denote the optimal (deterministic) policy that achieves the highest expected reward\footnote{There exists at least one deterministic optimal policy, and we choose such a policy as $\pi^*$.}. Let $V^*\coloneqq V^{\pi^*}(s_{\rm root})$ be the expected reward of the optimal policy $\pi^*$. Since $\pi^*$ is deterministic, let $\rbr{s_0^*, a_0^*, s_1^*, a_1^*, \cdots, s_H^*}$ represent the path from the root to a leaf node by following $\pi^*$, where $s_0^*=s_{\rm root}$.

\section{Unified Fine-Tuning (UFT)}

In this section, we introduce the two key features of UFT: (i) an exploration mechanism guided by hint, which improves sample efficiency by mitigating the sparse reward problem common in rule-based reinforcement learning \citep{guo2025deepseek-r1}; and (ii) a hybrid training objective that combines reinforcement learning with a log-likelihood term on hints, which provides a more informative learning signal and enables the model to acquire knowledge more effectively during fine-tuning.

\subsection{Exploration with Hint}
\label{section:explore-hint}

Although RFT is beneficial for training large models \citep{guo2025deepseek-r1}, several recent studies \citep{tinyzero} report that small models often fail to reason effectively, as they may never explore the correct answer even once due to the sparse reward. Additionally, other work has found that RFT's final performance is constrained by base models' capabilities \citep{gandhi2025cognitive}.

To address the sparse reward issue, UFT guides exploration using a hint, that is, trajectory sampling starts from the concatenation of the problem description and a hint, which is a partial solution to the problem (cf. \Cref{fig:UFT-prompt}). In this way, models will explore the correct answer more frequently.

RFT can be modeled as the task of finding a path from the root of the problem-solving tree to a leaf node that represents the correct answer. As shown in \Cref{fig:search-tree}, RFT needs to identify the green path. However, the problem-solving tree for real-world tasks, such as math problems, typically contains an enormous number of nodes, making it difficult for an LLM to discover the correct path through exploration alone. To make matters worse, under the rule-based reward model proposed in \citet{guo2025deepseek-r1}, only a small fraction of the leaf nodes correspond to correct answers, resulting in the well-known sparse reward problem \citep{ladosz2022exploration-sparse-reward}.

We address this challenge by concatenating the problem with a partial solution, referred to as the \emph{hint}, to guide the model towards the correct answer. \Cref{fig:UFT-prompt} (left) provides an example of UFT's prompt.

\begin{figure}[h]
    \centering
    \includegraphics[width=0.45\linewidth]{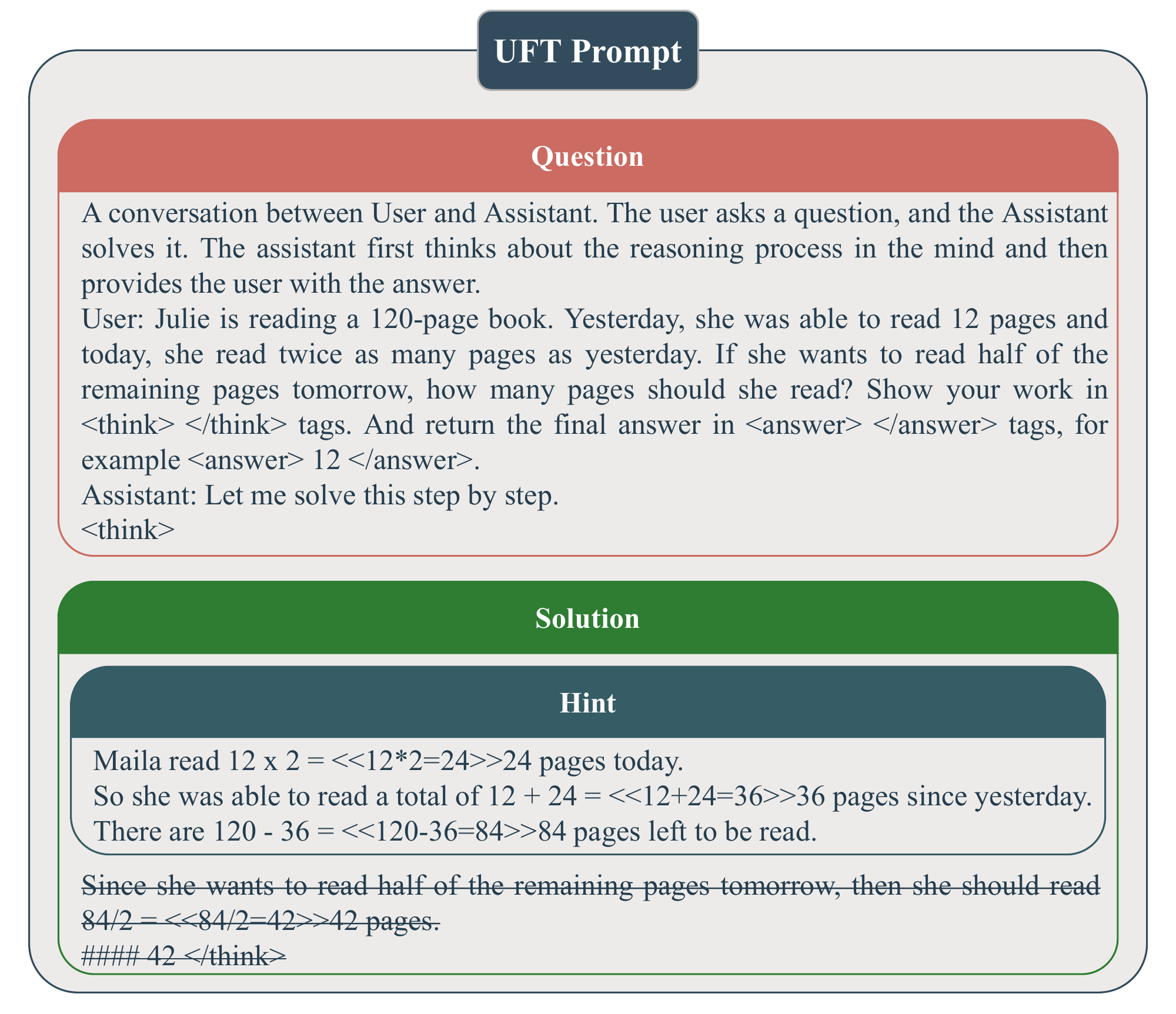}
    \includegraphics[width=0.45\linewidth]{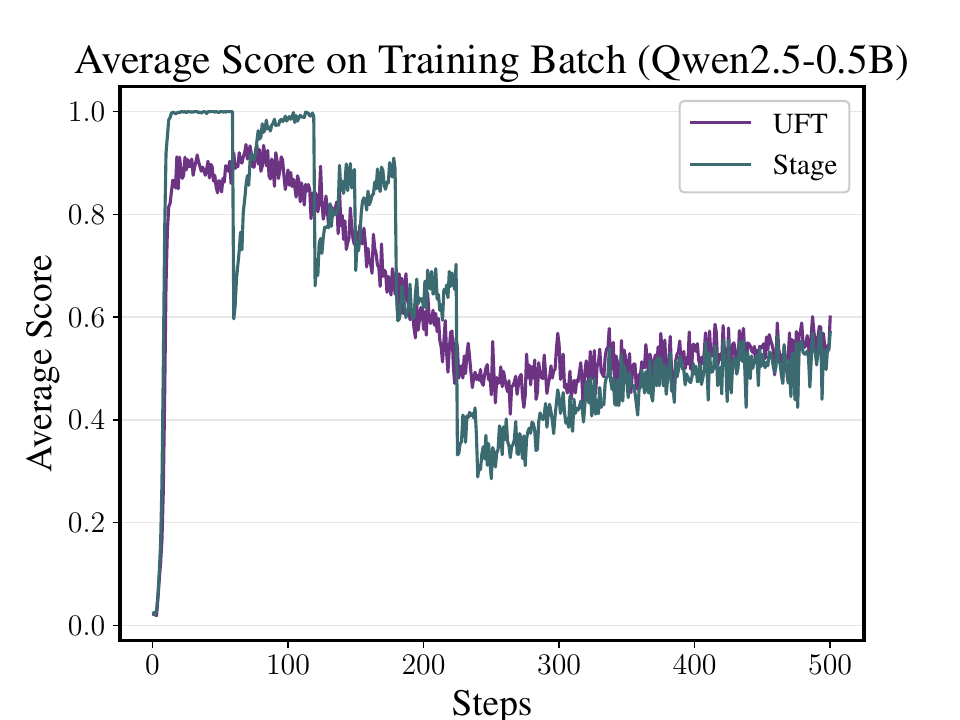}
    \caption{(left). An illustration of the UFT prompt. We adopt the prompting template from TinyZero \citep{tinyzero}, which is similar to that used in Deepseek-R1 \citep{guo2025deepseek-r1}. The hint consists of a slice of the full solution. During training, the question prompt and the hint are concatenated and fed to the model. (right). An illustration of the training curve of Qwen2.5-0.5B. Stage and UFT keep zero hint since step 300.}
    \label{fig:UFT-prompt}
\end{figure}

\subsubsection{Hint Length Sampling}
\label{section:hint-length-sampling}

Since we are ultimately interested in the LLM's performance when the hint length is zero, the hint must be gradually shortened during training. A natural idea is to subtract the hint length by a constant amount regularly, which is referred to as the staged reinforcement learning \citep{xi2024training-rlhint-uniform}. However, because solutions typically consist of no more than 10 sentences, changes in hint length can cause a significant distribution shift, leading to unstable training (cf. \Cref{fig:UFT-prompt}, right).

To avoid distribution shift during training, \citet{xi2024training-rlhint-uniform} samples the hint length uniformly from all possible values throughout the training. 
However, relying on hints throughout training introduces a significant distribution mismatch between training and evaluation. This often leads to performance collapse at test time, where no hints are available. To address this, UFT employs a smoothed reduction of hint length to zero, which (i) avoids drastic distribution shifts and (ii) better aligns the training distribution with the evaluation distribution.

Specifically, we maintain a variable $p\in [0, 1]$, representing the proportion of the solution revealed to the LLM as a hint. The value of $p$ gradually descends during training according to cosine annealing (cf. \Cref{eq:cosine-annealing}) \citep{loshchilov2016sgdr-cosine-annealing}. Let $l$ be the random variable indicating the hint length, and let $L$ be the total length of the solution (\emph{e.g.}, number of sentences). By definition, we require $l \in \cbr{0, 1, \cdots, L}$ and $\EE\sbr{l} = p \cdot L$, so that the expected hint length matches the proportion $p$. To achieve this, we sample $l\sim \text{Binomial}(L, p)$ from a binomial distribution\footnote{$\Pr\rbr{l=l_0}=\binom{L}{l_0}p^{l_0}(1-p)^{L-l_0}$ for any $l_0\in\cbr{0,1,\cdots,L}$. In other words, $l$ is the number of heads obtained when tossing $L$ independent coins, each landing heads with probability $p$.}. It is straightforward to verify that $\EE[l] = L \cdot \EE\sbr{c_1} = p \cdot L$. In \Cref{section:ablation}, we further show that UFT is robust to the choice of hint length distribution, and we choose binomial distribution due to its simplicity.

Compared to stage-wise hint length reduction, UFT provides a smoother transition from long to short hints. The training curves of these algorithms are shown in \Cref{fig:UFT-prompt} (right). We can see that the training curve of UFT is smoother and converges faster than that of the staged reinforcement learning. Note that staged reinforcement learning and UFT do not use any hint since step 300.

\begin{wrapfigure}{r}{0.62\textwidth}
  \centering
  \includegraphics[width=0.3\textwidth]{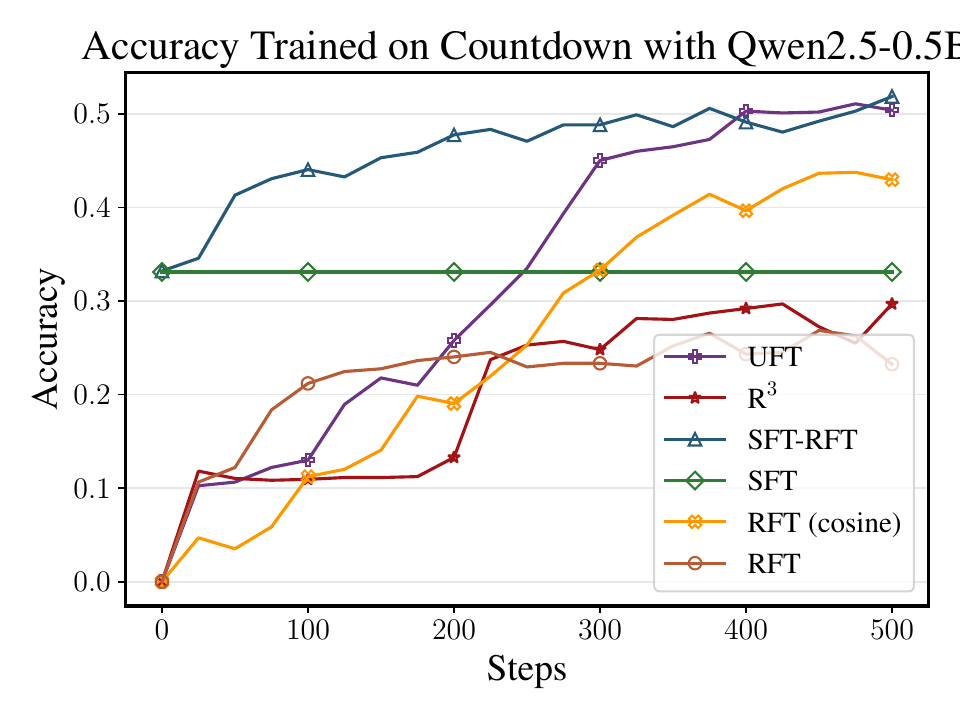}
  \includegraphics[width=0.3\textwidth]{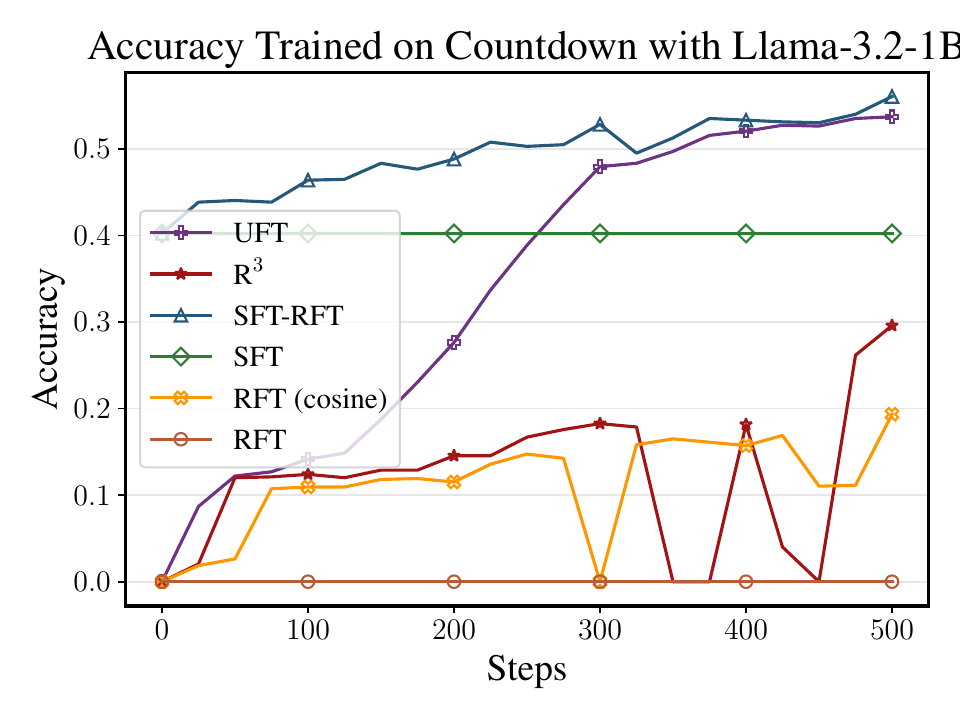}
  \caption{An ablation study of different hint length schedulers. RFT (cosine) refers to reinforcement learning with our cosine annealing hint length scheduler proposed in this section.}
  \label{fig:ablation-log-likelihood}
\end{wrapfigure}

As shown in \Cref{fig:ablation-log-likelihood}, although RFT (cosine), which is RFT equipped with the cosine annealing hint length scheduler, outperforms $\text{R}^3$ (uniform sampling), it is still worse than SFT-RFT. Furthermore, for Llama-3.2-1B, RFT (cosine) is even worse than SFT alone. This implies that the model's performance is hindered by its knowledge gained through pretraining \citep{gandhi2025cognitive}, which motivates the second modification of UFT introduced in \Cref{section:obj-function-modification}, an additional log-likelihood term in the objective function.

\subsection{Objective Function Modification}
\label{section:obj-function-modification}

The hinted RFT only enables LLMs to explore the correct solution more frequently, but remains inefficient at injecting new knowledge into the LLMs. This inefficiency arises because each sampled trajectory provides limited information, essentially a signal (correct/incorrect), which provides far less information than the supervision signal in SFT. In contrast, SFT enables more efficient knowledge acquisition, but suffers from poor generalization \citep{xie2025logic-rl,zeng2025simplerl-zoo}. To get the best of both worlds, UFT introduces an additional log-likelihood term to the objective function of RFT, allowing the model to learn from the informative supervision signal and still benefit from the generalization of RFT.

For notational simplicity, let $s_0 = s_{\rm root}, \rbr{s_h, a_h}_{h=0}^{H-1} \sim \pi$ denote the shorthand for $a_h \sim \pi(\cdot \given s_h)$ and $s_{h+1} = \cT(s_h, a_h)$, \emph{i.e.}, $\rbr{s_h, a_h}_{h=0}^{H-1}$ represents a trajectory sampled according to $\pi$ starting at $s_{\rm root}$. Formally, let $\cJ^{\rm value} \rbr{\rbr{s_h, a_h}_{h=0}^{H-1}}$ denote the objective function associated with the expected reward\footnote{In GRPO \citep{shao2024deepseekmath-grpo}, we have
    $\cJ^{\rm value}\rbr{\rbr{s_h,a_h}_{h=0}^{H-1}}\coloneqq \frac{1}{H}\sum_{h'=0}^{H-1}\min\Big\{\frac{\pi(a_{h'}\given s_{h'})}{\pi^{\rm old}(a_{h'}\given s_{h'})}\hat A_{h'},\allowbreak {\rm clip}\rbr{\frac{\pi(a_{h'}\given s_{h'})}{\pi^{\rm old}(a_{h'}\given s_{h'})}, 1-\epsilon, 1+\epsilon}\hat A_{h'}\Big\}$,
where $\pi$ is the current policy, $\pi^{\rm old}$ is the policy at the previous step, $\hat A_{h'}$ is the estimated advantage value in GRPO.}. Then, let $\beta > 0$ be the hyperparameter controlling the KL divergence, we have
{
\tiny
\begin{align}
    &\cJ^{\rm RFT}=\EE_{s_0=s_{\rm root}, \rbr{s_h,a_h}_{h=0}^{H-1} \sim \pi}\sbr{\cJ^{\rm value}\rbr{\rbr{s_h,a_h}_{h=0}^{H-1}} - \beta \sum_{h=0}^{H-1}\KL\rbr{\pi(\cdot\given s_h)\| \pi^{\rm ref}(\cdot\given s_h)}}\\
    &\cJ^{\rm UFT}=\EE_{\substack{l,s_0=s_{\rm root}\\ {\color{red} \rbr{s_h,a_h}_{h=0}^{l-1}\sim \pi^*},\\ \rbr{s_h,a_h}_{{\color{red} h=l}}^{H-1} \sim \pi}}\sbr{\cJ^{\rm value}\rbr{\rbr{s_h,a_h}_{{\color{red} h=l}}^{H-1}} - \beta \sum_{{\color{red} h=l}}^{H-1}\KL\rbr{\pi(\cdot\given s_h)\| \pi^{\rm ref}(\cdot\given s_h)} - {\color{red} \beta \sum_{h=0}^{l-1}\KL\rbr{\pi^*(\cdot\given s_h) \| \pi(\cdot\given s_h)}}}\label{eq:def-UFT-KL}
\end{align}
}
Compared to the objective function of GRPO, UFT adds an additional term ${\color{red} \beta \sum_{h=0}^{l-1}\KL\rbr{\pi^*(\cdot\given s_h) \| \pi(\cdot\given s_h)}}$, the KL divergence between the optimal policy and the current policy. Compared to $\cJ^{\rm value}$, this term explicitly guides the policy towards optimality, and thus results in a faster convergence rate. 

We remark that the optimal policy $\pi^*$ is unknown and we cannot compute ${\color{red} \beta \sum_{h=0}^{l-1}\KL\rbr{\pi^*(\cdot\given s_h) \| \pi(\cdot\given s_h)}}$ directly. However, thanks to the annotations contained in the dataset, we have access to a trajectory sampled according to $\pi^*$, \emph{i.e.}, $(s_h^*,a_h^*)_{h=0}^{H-1}\sim \pi^*$, which can be used to estimate the KL-divergence. According to the definition of KL-divergence, minimizing $\KL\rbr{\pi^*(\cdot\given s_h^*) \| \pi(\cdot\given s_h^*)}$ is equivalent to minimizing $\sum_{a_h=1}^B \pi^*(a_h\given s_h^*)\log \frac{1}{\pi(a_h\given s_h^*)}$ (omit terms irrelevant to $\pi$), and $\log \frac{1}{\pi(a_h^*\given s_h^*)}$ is an unbiased estimator of it, since $a_h^*\sim \pi^*(\cdot\given s_h^*)$. Therefore, \Cref{eq:def-UFT-KL} can be equivalently written as
{
\tiny
\begin{align}
    &\cJ^{\rm UFT}=\EE_{\substack{l, s_l=s_l^*,\\ \rbr{s_h,a_h}_{h=l}^{H-1} \sim \pi}}\sbr{\cJ^{\rm value}\rbr{\rbr{s_h,a_h}_{h=l}^{H-1}} - \beta \sum_{h=l}^{H-1}\KL\rbr{\pi(\cdot\given s_h)\| \pi^{\rm ref}(\cdot\given s_h)} {\color{blue} + \beta\sum_{h=0}^{l-1} \log \pi(a_h^*\given s_h^*)}}.\label{eq:def-UFT}
\end{align}
}
Therefore, the UFT objective \Cref{eq:def-UFT} can be interpreted as (i) maximizing the expected reward while (ii) staying close to the reference policy and (iii) memorizing the hint by maximizing the log-likelihood of producing the hint.
\begin{remark}
    The name of Unified Fine-Tuning (UFT) comes from the fact that when $p\equiv 0$ for all steps during training, \Cref{eq:def-UFT} is equivalent to RFT, since $\beta\sum_{h=0}^{l-1} \log \pi(a_h^*\given s_h^*)=0$. When $p\equiv 1$, then $\cJ^{\rm value}\rbr{\rbr{s_h,a_h}_{h=l}^{H-1}} - \beta \sum_{h=l}^{H-1}\KL\rbr{\pi(\cdot\given s_h)\| \pi^{\rm ref}(\cdot\given s_h)}=0$, so that \Cref{eq:def-UFT} degenerates to SFT. An illustration can be found in \Cref{fig:introduction} (top middle).
\end{remark}

It is noteworthy that after adopting the additional log-likelihood term, UFT's performance matches that of SFT-RFT for small models (cf. \Cref{fig:ablation-log-likelihood}). This suggests that UFT improves the ceiling of RFT by enabling the model to acquire new knowledge during post-training.
\vspace{-3mm}

\subsection{Algorithm Outline}

Overall, the UFT algorithm proceeds as follows:
\begin{itemize}
\item Sample a batch of problems $\cB$ along with their corresponding solutions.
\item For each problem, sample a hint length $l$ according to \Cref{section:hint-length-sampling}.
\item Concatenate each problem with its solution prefix of length $l$.
\item Train the model on $\cD$ using a reinforcement learning algorithm with the objective defined in \Cref{eq:def-UFT}.
\end{itemize}
The corresponding pseudocode is provided in \Cref{alg:uft}.

\section{Theoretical Justification}
\label{section:theory}

In this section, we provide a theoretical justification for UFT. First, we show that the lower bound of RFT's sample complexity grows exponentially ($\cO(B^H)$) as the tree height (reasoning length) increases. Second, we show that UFT may find the solution within a polynomial number of samples ($\cO\rbr{BH^5\log B}$), representing an \emph{exponential} improvement of tree height $H$ in sample complexity.

Next, we define the sub-optimality gap in reward, which is the difference between the rewards for correct and incorrect solutions.

\begin{definition}[Sub-Optimality Gap]
    There is a \emph{sub-optimality gap} $\Delta>0$ between the reward of optimal and suboptimal nodes. Formally, for any leaf node $s\in\cS_H$ with reward $\cR(s)<\max_{s'\in\cS_H} \cR(s')$, we have
    \begin{align}
        \cR(s)\leq \max_{s'\in\cS_H} \cR(s')-\Delta.
    \end{align}
\end{definition}
In this paper, there are only three possible outcomes for $\cR(s)$, \emph{i.e.}, no reward (incorrect format), format reward, and accuracy reward. Therefore, the sub-optimality gap
\begin{align}
    \Delta = \rbr{\text{accuracy reward}} - \rbr{\text{format reward}} = 1.0 - 0.1 = 0.9.
\end{align}
Next, we will give the lower bound on the RFT's sample complexity to achieve $50\%$ pass@1 success rate\footnote{The probability of reaching the correct answer when sampling a single trajectory.}.

\begin{theorem}[Lowerbound]
\label{theorem:lowerbound}
    For any integers $H\geq 1, B\geq 2$, and any RFT algorithm, there exists a problem with height $H$ and branching factor $B$, that satisfies the following: to achieve a $50\%$ pass@1 success rate, the algorithm needs to explore at least
    \begin{align}
        &\frac{B^H}{4}
    \end{align}
    nodes in $\cS_H$. Moreover, when there are multiple nodes in $\cS_H$ representing the correct solutions, \emph{e.g.}, $K \geq 1$, any algorithm needs to explore at least $\frac{B^H}{4K}$ nodes in $\cS_H$.
\end{theorem}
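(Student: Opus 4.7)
The plan is to prove the lower bound by an adversarial information-theoretic argument: I would construct a family of hard instances and apply Yao's minimax principle, reducing to a bound on the probability that an algorithm identifies the planted correct leaves when their location is a priori uniform.

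\textbf{Construction and reduction.} Fix the full $B$-ary tree of height $H$; for each size-$K$ subset $S\subseteq\cS_H$ define an instance $I_S$ in which every leaf in $S$ gets the accuracy reward $\cR=1$ and every other leaf gets the format reward $\cR=0.1$, with all internal nodes identical across instances. The only instance-dependent feedback available is whether a queried leaf lies in $S$, so by Yao's minimax principle it suffices to lower bound the expected leaf sample complexity $M$ of an arbitrary deterministic algorithm under the uniform prior on $S$. The key observation is that, conditioned on the first $t$ queries all returning the format reward, the posterior over $S$ remains uniform over size-$K$ subsets of the $B^H-t$ unexplored leaves; this lets me collapse any adaptive strategy to a fixed, data-independent query order throughout the ``missing'' phase.

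\textbf{Pass@1 bound.} Let $E$ denote the event that the algorithm hits $S$ within its $M$ queries. A union bound together with the uniform posterior gives $\Pr[E] \le \sum_{t=1}^{M} K/(B^H - t + 1) \le MK/(B^H - M)$. Conditioned on $E^c$, the output policy $\pi_{\rm out}$ is independent of $S$, so its expected probability of sampling a leaf in $S$ is at most $K/(B^H - M)$. Combining, the expected pass@1 rate is bounded by
\[
    \Pr[E]\cdot 1 \;+\; \Pr[E^c]\cdot \frac{K}{B^H - M} \;\le\; \frac{(M+1)K}{B^H - M}.
\]
Either $M > B^H/2$, in which case $M > B^H/(4K)$ trivially, or $M \le B^H/2$, in which case the right-hand side is at most $2(M+1)K/B^H$; requiring this to be $\ge 1/2$ forces $M \ge B^H/(4K) - 1$, yielding the desired bound. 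The $K=1$ case recovers $B^H/4$.

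\textbf{Main obstacle.} The delicate step is the adaptive-to-nonadaptive reduction: one must formally argue that, before any hit, adaptivity grants the algorithm no advantage over a fixed query order, by maintaining the uniform posterior on $S$ throughout the missing phase and verifying that intermediate (non-leaf) observations carry no instance-dependent information. Once this exchangeability argument is in place, the counting and algebra are routine.
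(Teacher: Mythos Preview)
Your proposal is correct and follows the same route as the paper: place a uniform prior over the $\binom{B^H}{K}$ possible locations of the $K$ correct leaves, invoke Yao's principle to reduce to deterministic algorithms, and bound the probability of hitting a correct leaf within $T$ queries by the union bound $\sum_{t=1}^{T} K/(B^H-t+1)$. You are in fact slightly more careful than the paper, which stops at ``the algorithm fails to find a correct leaf with probability at least $1/2$'' without explicitly converting this into a pass@1 statement; your extra step---conditioning on $E^c$ and noting that the output policy is then independent of $S$, so its expected hit rate is at most $K/(B^H-M)$---closes that gap, at the cost of an immaterial additive $-1$ in the final bound.
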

The proof constructs a set of problems with different correct solutions, which cannot be distinguished before exploring sufficient nodes in $\cS_H$. The details can be found in \Cref{section:proof-lowerbound}. Furthermore, the traditional lower bounds in reinforcement learning \citep{jin2018q-rl-lowerbound,domingues2021episodic-rl-lowerbound} are built on the stochastic transitions of the Markov decision process, but the search tree's transition is deterministic, which requires a different construction.

\Cref{theorem:lowerbound} implies that when the reward is sparse, such as when $K$ is a constant, learning the optimal policy takes a number of iterations exponential in the height of the tree. This also justifies why long reasoning is generally difficult \citep{chai2024ma-longcot-hard,chen2025towards-longcot-hard}.  In the following, we will show that UFT \emph{exponentially} improves the sample complexity. The full algorithm can be found in \Cref{alg:uft-theory}.

\begin{theorem}[Informal]
\label{theorem:convergence}
    When $\beta$ is small enough, \Cref{alg:uft-theory} obtains a $50\%$ pass@1 success rate when the algorithm explores
    \begin{align}
        \cO\rbr{B\frac{H^5\rbr{\log B}^2}{\Delta^2}}
    \end{align}
    nodes in $\cS_H$.
\end{theorem}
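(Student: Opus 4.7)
The plan is to prove convergence by tracking, level by level along the unique optimal path, the probability $\pi_h \coloneqq \pi(a_h^* \mid s_h^*)$ that the current policy picks the correct action at $s_h^*$. Since $\pi^*$ is deterministic, the pass@1 success rate of $\pi$ is at least $\prod_{h=0}^{H-1}\pi_h$, so a $50\%$ pass@1 follows from a uniform bound $\pi_h \geq 1 - c/H$ for an absolute constant $c$. The analysis proceeds by \emph{downward induction} on $h$ (from $H-1$ to $0$), exploiting the fact that cosine-annealing $p$ together with the Binomial-$(L,p)$ sampling of $l$ naturally implements a bottom-up curriculum: when $p$ is close to $1$ we start near the leaves and learn them first; as $p$ shrinks, the starting state moves toward the root, by which time the downstream levels have already been secured.

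The inductive step reduces UFT at level $h$ to a $B$-armed bandit problem with sub-optimality gap $\Omega(\Delta)$. Assuming $\pi_{h'} \geq 1 - c/H$ for all $h'>h$, the expected reward starting from $s_h^*$ is at least $\prod_{h'>h}\pi_{h'} \geq 1 - c$ when action $a_h^*$ is chosen, whereas any other action at $s_h^*$ enters a subtree containing no correct leaf (by uniqueness of $\pi^*$) and hence yields reward at most $1-\Delta$. During the $h$-th phase I would tune $p$ so that the Binomial law for $l$ places $\Omega(1)$ mass on both $\cbr{l = h}$ and $\cbr{l > h}$: in the former regime the value term $\cJ^{\rm value}$ delivers a GRPO-style bandit gradient on $\pi(\cdot\mid s_h^*)$, while in the latter the red log-likelihood term of \Cref{eq:def-UFT} contributes a direct SFT gradient $\beta\,\nabla\log\pi(a_h^*\mid s_h^*)$. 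A standard mirror-descent / softmax policy-gradient argument on the $B$-simplex, with $\beta$ taken small enough that the KL-to-reference regularizer does not quench the supervision, yields $\pi_h \geq 1 - c/H$ in $\cO\rbr{B H^4 (\log B)^2 / \Delta^2}$ trajectories with failure probability at most $1/\mathrm{poly}(BH)$. Summing the $H$ phases and union-bounding the failure events across levels produces the claimed $\cO\rbr{B H^5 (\log B)^2 / \Delta^2}$ total node count.

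The hard part is controlling the propagation of error across levels in the presence of the three competing forces in the UFT objective. A level-$h$ target accuracy $1 - \epsilon$ shrinks the effective bandit gap seen at level $h-1$ by a factor $(1-\epsilon)^{H-h}$, which forces $\epsilon = O(1/H)$ and costs an extra $H^2$ factor in the per-level sample bound; meanwhile $\beta$ must be small enough for the value signal to register near the optimum but large enough that the log-likelihood supervision dominates the value gradient during the early phases, when the value term is nearly flat. Verifying that the Binomial schedule of $l$ keeps both regimes simultaneously active on every level at the correct training time, and assembling the per-level simplex-convergence bounds into a clean uniform statement, is what inflates the polynomial exponent from a naive $H^3$ up to $H^5$.
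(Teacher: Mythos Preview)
Your proposal analyzes the wrong algorithm. The theorem is stated for \Cref{alg:uft-theory} (the ``theoretically sound'' variant in the appendix), which samples the hint length $l^{(t)}$ \emph{uniformly} from $\{0,1,\dots,H\}$ at every step and performs per-action group rollouts to form $\tilde Q^{(t)}$. The cosine-annealed Binomial schedule you build your curriculum argument around belongs to the practical \Cref{alg:uft}, which is not what the theorem covers. As a consequence, your phase structure (``when $p$ is close to $1$ we learn the leaves first, then $p$ shrinks'') does not exist in the algorithm you are asked to analyze: at every step, every level $h$ is the starting level with probability exactly $1/(H+1)$.

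The paper's proof takes a completely different, non-phased route. It is a global online-learning argument: for each node $s$ on the optimal path, the softmax update is shown to be mirror descent on $\Delta^B$, giving a telescoping bound $\sum_{t=1}^T\langle \tilde A^{(t-1)}(s,\cdot),\pi^*(\cdot\mid s)-\pi^{(t-1)}(\cdot\mid s)\rangle \le (1/\eta+\beta T)\KL(\pi^*\|\pi^{\rm ref})+2\eta T$ (\Cref{lemma:update-rule-analysis}). A martingale concentration (\Cref{proposition:concentration}) converts $\tilde A$ to the true $Q$, and then the performance-difference lemma (\Cref{lemma:decomposition}) assembles the per-node bounds into $\sum_t(V^*-V^{\pi^{(t-1)}})$. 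The uniform hint distribution enters only through the inequality $\Pr(s_h^*\text{ is on the sampled trajectory})\ge 1/(H+1)$, which caps the importance weight at $H+1$. Choosing $\eta=1/\sqrt{T}$ and small $\beta$ gives $T=\cO(H^4(\log B)^2/\Delta^2)$, and the $BH$ rollouts per step yield the $H^5$.

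Beyond the algorithm mismatch, your bandit reduction has a genuine gap. You assert that any action $a\neq a_h^*$ at $s_h^*$ ``enters a subtree containing no correct leaf (by uniqueness of $\pi^*$).'' The paper only fixes \emph{one} deterministic optimal policy $\pi^*\in\argmax_\pi V^\pi$; it never assumes the optimal leaf is unique, and the sub-optimality gap $\Delta$ constrains only leaf rewards, not $Q$-values at interior nodes. If another child of $s_h^*$ also leads to an optimal leaf, its $Q$-value is $V^*$ and your per-level bandit gap collapses to zero. The paper's regret-decomposition approach sidesteps this entirely because it bounds $V^*-V^{\pi^{(t)}}$ directly rather than forcing $\pi^{(t)}(a_h^*\mid s_h^*)\to 1$ coordinatewise. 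Your phase-based plan would also need to argue that levels already secured do not drift when updates continue at shallower hint lengths, an issue the paper's all-steps-simultaneously analysis never has to confront.
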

The formal version is deferred to \Cref{section:proof-convergence}. Note that the $50\%$ pass@1 in both \Cref{theorem:lowerbound} and \Cref{theorem:convergence} can be arbitrarily adjusted, and it only affects the sample complexity by a constant factor. From \Cref{theorem:convergence}, we observe that the dependence on $H$ is reduced from $B^H$ to $H^5$, representing an exponential improvement enabled by the use of hints. Moreover, $\Delta^2$ in the denominator implies that the difference between accuracy reward and format reward should be large for fast convergence, which is also supported by empirical studies \citep{shao2024deepseekmath-grpo,tinyzero,zeng2025simplerl-zoo}.

\section{Experiments}
\label{section:experiment}
In this section, we present the experimental results of UFT. We demonstrate several key properties of UFT: (i) When the model is small ($\leq 1B$) and SFT outperforms RFT, UFT's performance matches that of SFT. (ii) When the model is large ($\sim 3B$) and RFT outperforms SFT due to better generalization, UFT's performance matches that of RFT (and sometimes even outperforms it, cf. \Cref{table:full-results}).

\begin{wrapfigure}[13]{r}{0.32\textwidth}
  \centering
  \includegraphics[width=0.3\textwidth]{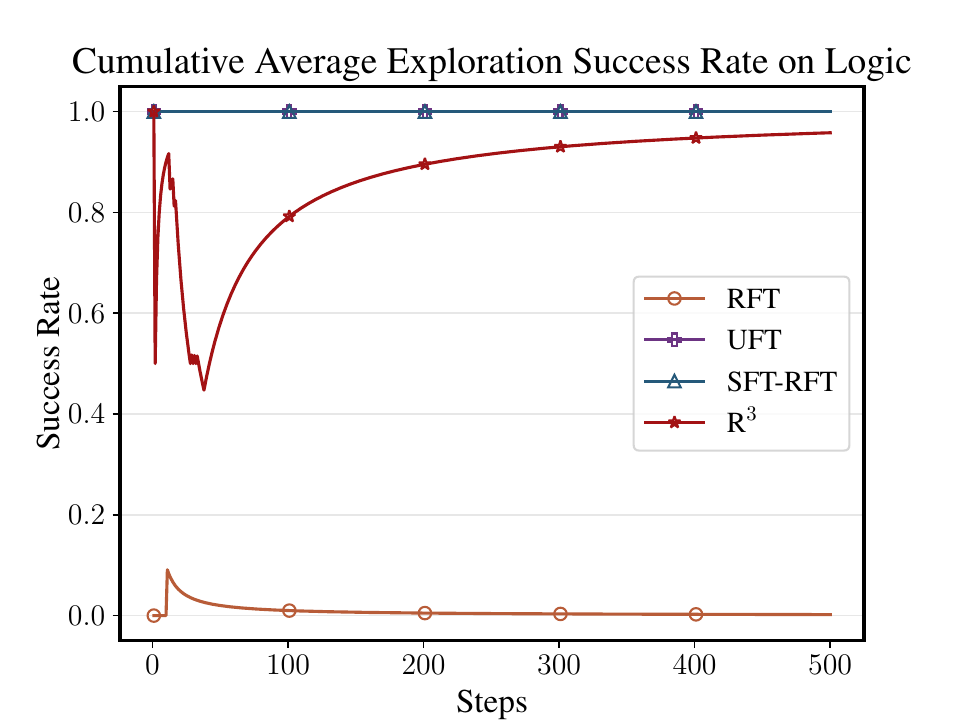}
  \caption{Qwen2.5-0.5B's cumulative average success rate for exploring the correct answer at each step when trained on Logic.}
  \label{fig:max-score}
\end{wrapfigure}

In experiments, we train Qwen2.5-0.5B, Qwen2.5-1.5B, Qwen2.5-3B \citep{qwen2.5}, Llama-3.2-1B, and Llama-3.2-3B \citep{llama3} on Countdown \citep{WikipediaCountdown,tinyzero}, MATH(3,4,5) (only level 3-5 included) \citep{hendrycksmath2021-math,zeng2025simplerl-zoo}, and the Knights and Knaves logic puzzle (Logic) \citep{xie2025logic-rl}.

\subsection{The Memorization of UFT}

\begin{figure}[t]
    \centering
    \includegraphics[width=0.95\linewidth]{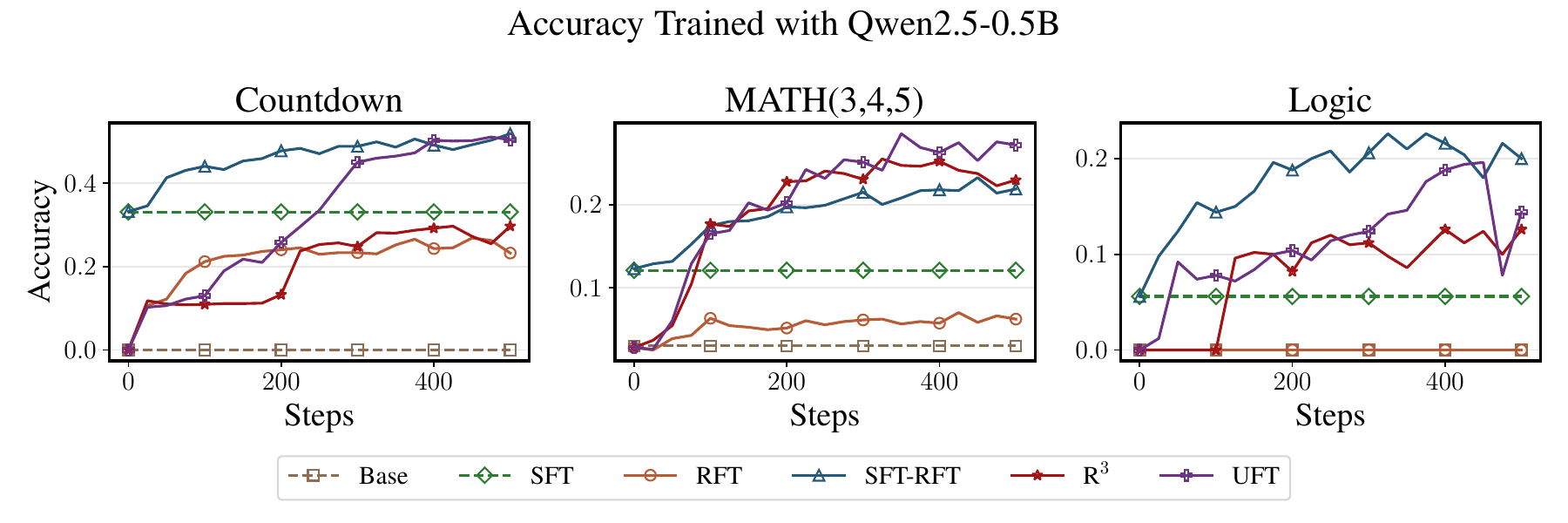}
    \caption{An illustration of the accuracy on the test dataset of Qwen2.5-0.5B. Base is the base model without fine-tuning. $\text{R}^3$ \citep{xi2024training-rlhint-uniform} trained the model with RFT and a uniform distribution over all hint lengths. SFT-RFT refers to training a supervised fine-tuned model with RFT, and UFT is our algorithm.}
    \label{fig:memorization}
\end{figure}

As shown in \Cref{fig:memorization}, we can see that when the model is small, the improvement from RFT is marginal, since the model rarely explores the correct answer. As shown in \Cref{fig:max-score}, when training Qwen2.5-0.5B on Logic, RFT rarely explores the correct answer, while UFT finds it at every single timestep.

Compared to $\text{R}^3$, where hints are also applied, UFT outperforms it since UFT (i) gradually shifts the distribution toward a hint length of zero, and (ii) maximizes the log-likelihood on hints to encode information about the solution in gradients. The proximity between the performance of UFT and SFT-RFT also supports the conclusion that UFT helps the model to memorize the solution when the model's initial capacity is not enough to solve it.

\subsection{The Generalization of UFT}

\begin{figure}[t]
    \centering
    \includegraphics[width=0.95\linewidth]{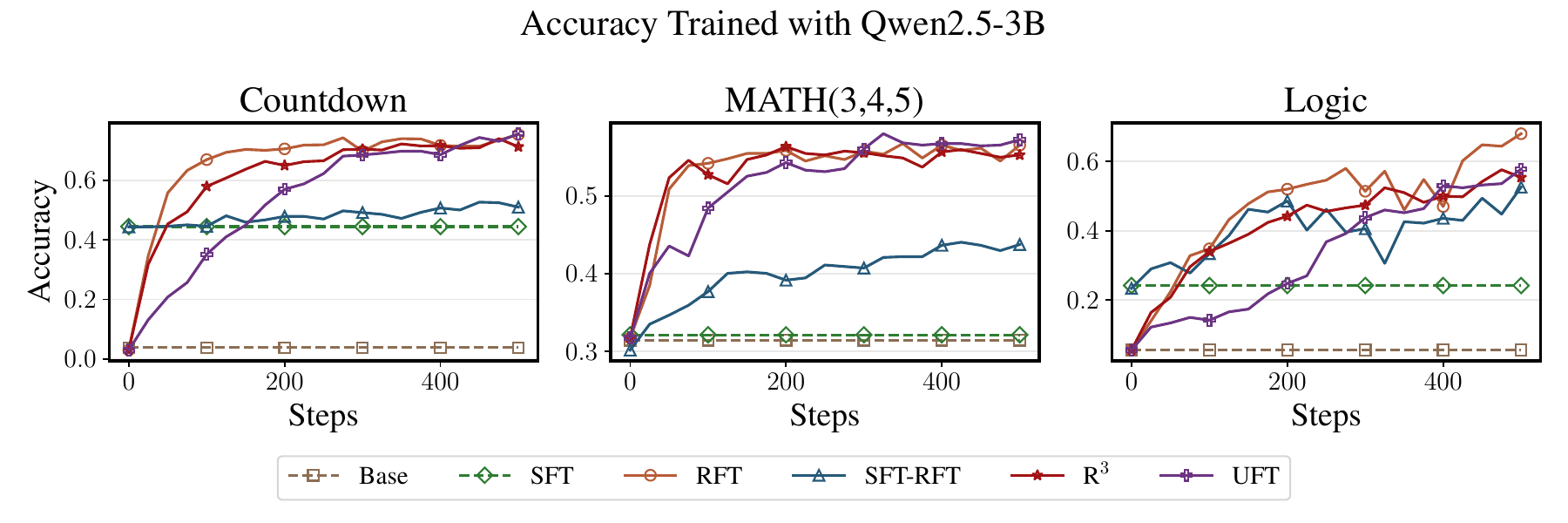}
    \caption{An illustration of the accuracy on test dataset of Qwen2.5-3B. Base refers to the base model without fine-tuning.}
    \label{fig:generalization}
\end{figure}

As shown in \Cref{fig:generalization}, when the model is larger and its prior knowledge gained from pertaining is enough for reasoning, UFT generalizes well as RFT. In contrast, SFT and SFT-RFT are worse, since SFT leads to overfitting. These experiments show that UFT will automatically adapt to model size and enjoy the advantage of both SFT and RFT.

As shown in \Cref{fig:generalization}, when the model is larger and its prior knowledge gained from pretraining is sufficient for reasoning, UFT generalizes well as RFT. In contrast, SFT and SFT-RFT perform worse, since SFT leads to overfitting. These experiments show that UFT automatically adapts to model size and benefits from the advantages of both SFT and RFT.

\subsection{UFT Helps LLMs Learn New Knowledge}

\begin{figure}
    \centering
    \includegraphics[width=0.95\linewidth]{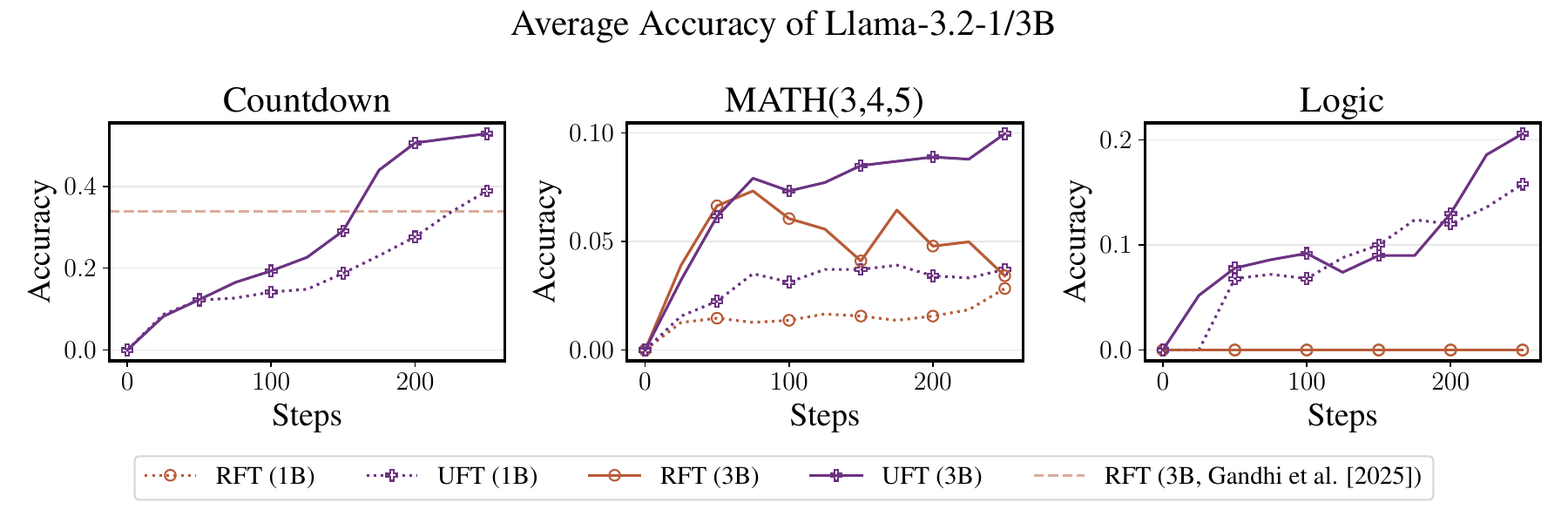}
    \caption{The comparison of Llama-3.2-1B/3B's behavior in Countdown/MATH/Logic when applying RFT/UFT. In Countdown, the dotted line is the accuracy of Llama-3.2-3B after 250 steps RFT reported in \citet{gandhi2025cognitive} .}
    \label{fig:llama}
\end{figure}

In \citet{gandhi2025cognitive}, it was found that Llama-3.2-3B's improvement through RFT is marginal compared to that of Qwen2.5-3B. This is because Llama gains less reasoning-related knowledge from pertaining, \emph{e.g.}, backtracking and subgoal setting. In \Cref{fig:llama}, we can see that UFT significantly improves the performance of Llama-3.2. In Countdown, even Llama-3.2-1B outperforms Llama-3.2-3B fine-tuned by RFT after the same number of steps (250 steps). This supports the claim that UFT introduces new knowledge to the model, whereas RFT only helps the model utilize its existing knowledge \citep{yue2025does-rl-no-new-knowledge}.

\subsection{Computational Cost of UFT}

The computational costs of UFT and RFT are shown in \Cref{fig:computational-cost}. Interestingly, UFT is faster than RFT, as it begins reasoning from a partial solution (a hint) rather than starting from scratch, thereby reducing the rollout cost during training.

\begin{figure}
    \centering
    \includegraphics[width=0.5\linewidth]{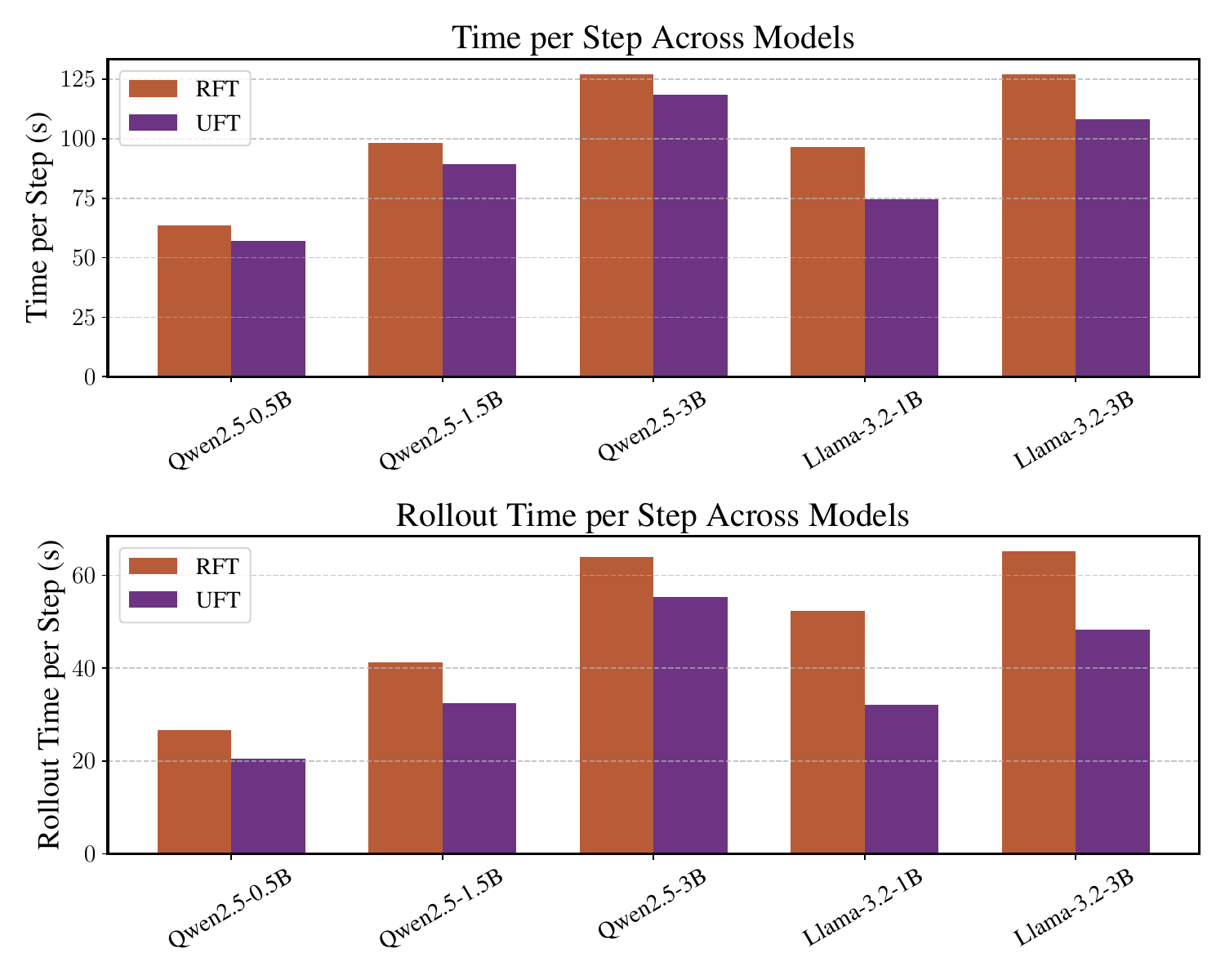}
    \caption{The computational cost per step of UFT and RFT.}
    \label{fig:computational-cost}
\end{figure}

\section{Conclusion and Limitations}
\label{section:conclusion}

This paper proposes a novel fine-tuning framework, UFT, which unifies SFT and RFT. Empirically, we show that UFT outperforms both SFT and RFT in general. Specifically, by adopting UFT, small models tend to memorize while large models generalize. Theoretically, we prove that UFT achieves an exponential speed-up compared to RFT. However, throughout the paper, we use only the human-annotated solutions in the dataset and GRPO as the reinforcement learning algorithm. Moreover, we do not employ state-of-the-art (\emph{e.g.}, 70B-scale) models in our experiments due to computational constraints, but provide evidence that UFT remains beneficial under these settings (see \Cref{section:generalization}). In the future, it would be interesting to explore the incorporation of advanced SFT and RFT techniques into UFT. For instance, using long chain-of-thoughts generated by large models \citep{muennighoff2025s1-simple,gandhi2025cognitive} for SFT, and choosing other reinforcement learning algorithms such as REINFORCE++ \citep{hu2025reinforce++} and DAPO \citep{yu2025dapo} as the reinforcement learning algorithm for UFT.

\section{Acknowledgement}

The authors would like to thank Jacob Andreas, Chanwoo Park, and Kaiqing Zhang for their valuable discussions. The authors would also like to thank the support of Siebel Scholarship and NSF Award CCF-2443068.

\bibliographystyle{plainnat}
\bibliography{ref}

\appendix

\newpage

\section{Related Work}
\label{section:related-work}

In this section, we introduce related work about SFT, RFT, and curriculum learning for reasoning.

\paragraph{Supervised Fine-Tuning (SFT) for Reasoning.}
Different SFT methods for enhancing reasoning capability usually differ in the source of the collected reasoning trace. \citet{zeng2025simplerl-zoo} uses traditional SFT, \emph{i.e.}, learning from the human-annotated problem solutions. In contrast, \citet{gandhi2025cognitive,muennighoff2025s1-simple} utilize long chain-of-thoughts solutions generated by some large models, such as Claude and Deepseek-R1 \citep{guo2025deepseek-r1}. On the other hand, \citet{yuan2023scaling-reject-sampling,xie2025logic-rl} utilizes rejection sampling fine-tuning. Specifically, the model will generate multiple reasoning traces, and the one that leads to the correct answer is selected for further fine-tuning. In this paper, we use human annotations as the SFT data (traditional SFT), as it is sufficient for our purpose and keeps the focus on our main contribution (unifying SFT and RFT).

\paragraph{Reinforcement Fine-Tuning (RFT) for Reasoning.}

RFT for reasoning can be categorized into process supervision and outcome supervision. Process supervision assigns a reward to each step of a long reasoning trace \citep{lightman2023let-process}, which evaluates whether each step is correct or not. The main drawback of process supervision is that it is costly to prepare step-by-step feedback data. On the other hand, outcome supervision assigns a single reward to the entire trace \citep{guo2025deepseek-r1,zeng2025simplerl-zoo,yu2025dapo}, \emph{e.g.}, whether the trace yields the correct answer to a math problem. Furthermore, \citet{wang2023math-auto-process,yuan2024free-auto-process,zhong2024dpo-auto-process,luo2024improve-auto-process,setlur2024rewarding-auto-process} learn a step-by-step reward model from a collection of reasoning traces with outcome rewards, which avoids the cost of preparing step-by-step data. In this paper, due to the efficiency and simplicity of outcome supervision, we focus on the comparison with RFT using outcome supervision.

\paragraph{Curriculum Learning for Reasoning.}

Existing curriculum reinforcement learning for reasoning mainly focuses on utilizing a collection of problems with varying difficulties \citep{wen2025light-curriculum,shi2025efficient-curriculum,song2025fastcurl-curriculum}. These methods train the model with problems of gradually increasing difficulty, where the difficulty is determined by predefined criteria, such as the length of the successful reasoning trace \citep{song2025fastcurl-curriculum} or the success rate of baseline models \citep{shi2025efficient-curriculum,wen2025light-curriculum}. However, such methods fail when the problems in the dataset are homogeneous in difficulty. In contrast, \citet{xi2024training-rlhint-uniform} proposes a curriculum learning method that concatenates the problem with a slice of the solution (hint). The difficulty is determined by the hint length. However, \citet{xi2024training-rlhint-uniform} uses a uniform distribution over all possible hint lengths, which misaligns with the distribution of interest (zero hint length). On the other hand, UFT designs a hint length scheduler that smoothly reduces the hint length to zero. Furthermore, UFT adds an additional log-likelihood term for the hint in the objective function, which helps the model to acquire new knowledge more efficiently and increases the ceiling of reinforcement learning (cf. \Cref{fig:ablation-log-likelihood}).

\section{Experiment Details}
\label{section:experiment-details}

In this section, we describe our experimental setup and results. We present the pseudocode for UFT (\Cref{section:experiment-alg}), detail the hyperparameters used (\Cref{section:experiment-param}), and conduct an ablation study (\Cref{section:ablation}). We also provide an analysis of the model's generalization to larger scales and report additional experimental findings (\Cref{section:experiment-additional-results}).

\subsection{Algorithm}
\label{section:experiment-alg}

The pseudo-code of UFT is presented in \Cref{alg:uft}. In lines 4-9: we sample the hint length for each (question, solution, answer) pair in the sampled data batch $\cB$. In lines 11-13, we concatenate the question with the partial solution of length $l{(t)}$ and feed it into a reinforcement learning algorithm (such as GRPO), with the objective function \Cref{eq:def-UFT}. \Cref{section:theory} discusses the theoretical properties of UFT. To facilitate concrete convergence analysis, we specify the update rule of UFT, and the theoretically grounded variant is provided in \Cref{alg:uft-theory}.

\begin{algorithm}[t]
\caption{Unified Fine-Tuning}
\label{alg:uft}
\SetAlgoLined
\KwHyper{KL-penalty coefficient $\beta$, total number of steps $T$, number of steps with hint $T_{\rm hint}$, low/high probability $p^{\rm low}/p^{\rm high}$ for hint sampling, and hint length $L$}
\KwInput{Reference policy parameter $\btheta^{\rm ref}$}
\KwInit{$\btheta^{(0)}\gets\btheta^{\rm ref}$}

\For{$t=0, 1, \cdots, T-1$}{

Sample a batch of problems $\cB$

$\cD\gets\cbr{}$

\For{$(Q,S,A)\in \cB$}{
\tcp{For each (question, solution, answer) pair}
\eIf{$t < T_{\rm hint}$}{
\begin{align}
    p^{(t)} \gets p^{\rm low} + \frac{1}{2}\rbr{p^{\rm high} - p^{\rm low}}\rbr{1+\cos\rbr{\frac{t+1}{T_{\rm hint}}\uppi}}\label{eq:cosine-annealing}
\end{align}
\tcp{Cosine annealing, $\uppi\approx 3.14159$ is the Pi constant}

Sample $l^{(t)}\sim {\rm Binomial}\rbr{\min\cbr{L, \text{len}(S)}, p^{(t)}}$
}{
$l^{(t)}=0$
}

$\cD\gets \cD\cup\cbr{Q+S[:l^{(t)}]}$\tcp{Concatenate the question with the partial solution (hint) and add to $\cD$}

}

Run reinforcement learning algorithm on $\cD$ with the objective function \Cref{eq:def-UFT} 

}
\end{algorithm}

\subsection{Cost and Implementation Details}
\label{section:experiment-param}

The project costs roughly \$10,000 GPU hours. The experiment is based on VERL \citep{sheng2024hybridflow-verl} and TinyZero \citep{tinyzero}. The hyperparameters for training on different datasets are listed in \Cref{table:parameter}. The omitted hyperparameters follow the default values of VERL \citep{sheng2024hybridflow-verl}.

 Additionally, we provide an illustration of a hint in MATH(3,4,5). Basically, we divide the whole solution by sentences, then uniformly divide the sentences into $L$ buckets. Then, during training, we will sample $l$ to decide the hint length (how many buckets included).
 {
 \small
\begin{verbatim}
 [
"For the piecewise function to be continuous, the cases must \"meet\" at $2$ and $-2$. ",
"For example, $ax+3$ and $x-5$ must be equal when $x=2$. ",
"This implies $a(2)+3=2-5$, which we solve to get $2a=-6 \\Rightarrow a=-3$. ",
"Similarly, $x-5$ and $2x-b$ must be equal when $x=-2$. ",
"Substituting, we get $-2-5=2(-2)-b$, which implies $b=3$. ",
"So $a+b=-3+3=\\boxed{0}$."
 ]
\end{verbatim}
}
\begin{table}[th]
\centering
\begin{tabular}{llc}
\toprule
\multicolumn{3}{c}{\textbf{Data}} \\
\midrule
 & Training Batch Size      & 256  \\
 & Validation Batch Size    & 1312  \\
 & Mini-batch Size          & 64  \\
 & Hint Length              & 5  \\
\midrule
\multicolumn{3}{c}{\textbf{Training}} \\
\midrule
 & Learning Rate                  & $10^{-6}$  \\
 & $\beta$                  & 0.001  \\
 & $T$                      & 500  \\
 & $T_{\text{hint}}$        & 300 \\
 & Number of Rollouts       & 4  \\
 &&\\
 & \multirow{3}{*}{Context Window (Prompt)}            & \multicolumn{1}{l}{Countdown: 256} \\
 &                          & \multicolumn{1}{l}{MATH(3,4,5): 1024} \\
 &                          & \multicolumn{1}{l}{Logic: 1024} \\
 &&\\
 & Context Window (Response)          & 1024 \\
 & $p^{\rm low}$          & 0.05  \\
 & $p^{\rm high}$          & 0.95  \\
 & SFT Epochs          & 5  \\
\midrule
\multicolumn{3}{c}{\textbf{Reward}} \\
\midrule
 & Accuracy Reward          & 1.0  \\
 & Format Correctness Reward & 0.1 \\
 & Incorrect Reward         & 0.0  \\
\bottomrule
\end{tabular}
\vspace{0.5em}
\caption{The hyperparameters for training on different datasets. The other parameters follow the default parameters of VERL \citep{sheng2024hybridflow-verl}.}
\label{table:parameter}
\end{table}

\subsection{Ablation Study}
\label{section:ablation}

\paragraph{Ablation on Hint Length.} We conducted a study on Qwen2.5-0.5B (MATH(3,4,5)), testing dividing the solution into $L=4/5/6$ pieces uniformly and sampling hint length among $\{0,1,\dots, L\}$. We choose MATH(3,4,5) instead of Countdown because the solution length of MATH(3,4,5) is relatively longer. The results are shown in \Cref{table:hint-length}.

\begin{table}[H]
\centering
\begin{tabular}{c c}
\hline
\textbf{Hint Length Setting} & \textbf{Accuracy (\%)} \\
\hline
Hint length $L=4$ & 27.44 \\
Hint length $L=5$ & 27.15 \\
Hint length $L=6$ & 25.20 \\
\hline
\end{tabular}
\caption{Ablation on hint length for Qwen2.5-0.5B (MATH(3,4,5)).}
\label{table:hint-length}
\end{table}

This suggests that UFT is relatively insensitive to the total number of pieces we divided the solution into.

\paragraph{Ablation on $\beta$.} We evaluated different $\beta$ values on Qwen2.5-0.5B (Countdown). The results are shown in \Cref{table:beta-0.5B}.

\begin{table}[H]
\centering
\begin{tabular}{c c}
\hline
\textbf{$\beta$} & \textbf{Accuracy (\%)} \\
\hline
$\beta=0.0005$ & 46.09 \\
$\beta=0.001$ & 50.39 \\
$\beta=0.002$ & 59.95 \\
\hline
\end{tabular}
\caption{Ablation on $\beta$ for Qwen2.5-0.5B (Countdown).}
\label{table:beta-0.5B}
\end{table}

A higher $\beta$ amplifies the impact of the supervised log-likelihood term, helping the model learn more from hints. For all main experiments, we adopt $\beta$ = 0.001, the default in VERL. For larger models, our preliminary results show that varying $\beta$ yields minimal performance changes, likely due to their stronger pretrained priors. \Cref{table:beta-3B} presents the results of Qwen2.5-3B (Countdown).

\begin{table}[H]
\centering
\begin{tabular}{c c}
\hline
\textbf{$\beta$} & \textbf{Accuracy (\%)} \\
\hline
$\beta=0.0005$ & 77.93 \\
$\beta=0.001$ & 75.59 \\
$\beta=0.002$ & 78.22 \\
\hline
\end{tabular}
\caption{Ablation on $\beta$ for Qwen2.5-3B (Countdown).}
\label{table:beta-3B}
\end{table}

\paragraph{Ablation on Hint Phase Length.} We test different durations for the hint phase on Qwen2.5-0.5B (Countdown), as shown in \Cref{table:hint-phase-length}.

\begin{table}[h!]
\centering
\begin{tabular}{c c}
\hline
\textbf{Hint Phase Length} & \textbf{Accuracy (\%)} \\
\hline
$T_{\rm hint}=200$ & 52.15 \\
$T_{\rm hint}=300$ & 50.39 \\
$T_{\rm hint}=400$ & 50.00 \\
\hline
\end{tabular}
\caption{Ablation on hint phase length ($T_{\rm hint}$) for Qwen2.5-0.5B (Countdown).}
\label{table:hint-phase-length}
\end{table}

These small variations confirm that UFT is robust to the hint phase length.

\paragraph{Ablation on Hint Length Distribution.} We also evaluate UFT under a two-point hint length sampling scheme: for any expected hint length $p\cdot L\in [n, n+1)$, we set $l=n$ with probability $n+1-p\cdot L$ and $l=n+1$ with probability $p\cdot L-n$. $p$ is still determined by the cosine-annealing scheduler. Further details are provided in \Cref{table:distribution}.

\begin{table}[h!]
\centering
\begin{tabular}{c c}
\hline
\textbf{Hint Length Distribution} & \textbf{Accuracy (\%)} \\
\hline
Two-point & 48.63 \\
Binomial & 50.39 \\
\hline
\end{tabular}
\caption{Ablation on hint length distribution for Qwen2.5-0.5B (Countdown).}
\label{table:distribution}
\end{table}
We can see that UFT is also robust to different choices of hint length distributions.

\subsection{Generalization to Larger Models}
\label{section:generalization}

Even though we cannot afford training a larger model, we believe UFT will be helpful for the state-of-the-art models on tasks out of its capacity. For instance, some tasks involving knowledge of a special sub-field or extremely complex reasoning problems.

The following empirical results (\Cref{table:generalization}) may hint at this. For instance, on the logic puzzle, which requires more abstract reasoning, Qwen2.5-1.5B performs poorly with RFT alone (0.00\%) but achieves 23.00\% with UFT:

\begin{table}[H]
\centering
\begin{tabular}{lcc}
\hline
\textbf{Task (Qwen2.5-1.5B)} & \textbf{RFT} & \textbf{UFT} \\
\hline
Countdown & 71.48\% & 71.48\% \\
MATH(3,4,5) & 46.68\% & 47.95\% \\
Logic & 0.00\% & 23.00\% \\
\hline
\end{tabular}
\caption{Performance comparison between RFT and UFT across tasks with Qwen2.5-1.5B.}
\label{table:generalization}
\end{table}

In conclusion, although Qwen2.5-1.5B has enough capacity for Countdown and MATH(3,4,5), it is insufficient in solving the logic puzzle and UFT outperforms RFT by a large margin. Therefore, it is likely that for larger models, UFT will be helpful when solving extremely hard tasks.

This stark difference in Logic suggests that even when a model appears to have “enough capacity” (as Qwen2.5-1.5B does for Countdown and MATH), it can still benefit significantly from UFT when the task challenges its reasoning ability. Hence, for much larger models facing extremely complex tasks, we anticipate that UFT will continue to provide meaningful advantages.

\subsection{Additional Results}
\label{section:experiment-additional-results}

\Cref{fig:response} shows the response of the model trained via different algorithms. For Qwen2.5-0.5B, UFT's response aligns with the solution better than RFT's. For Qwen2.5-3B, UFT generates a longer reasoning trace and presents skills such as verification \citep{gandhi2025cognitive}, while SFT-RFT does not.

\begin{figure}[t]
    \centering
    \includegraphics[width=0.45\linewidth]{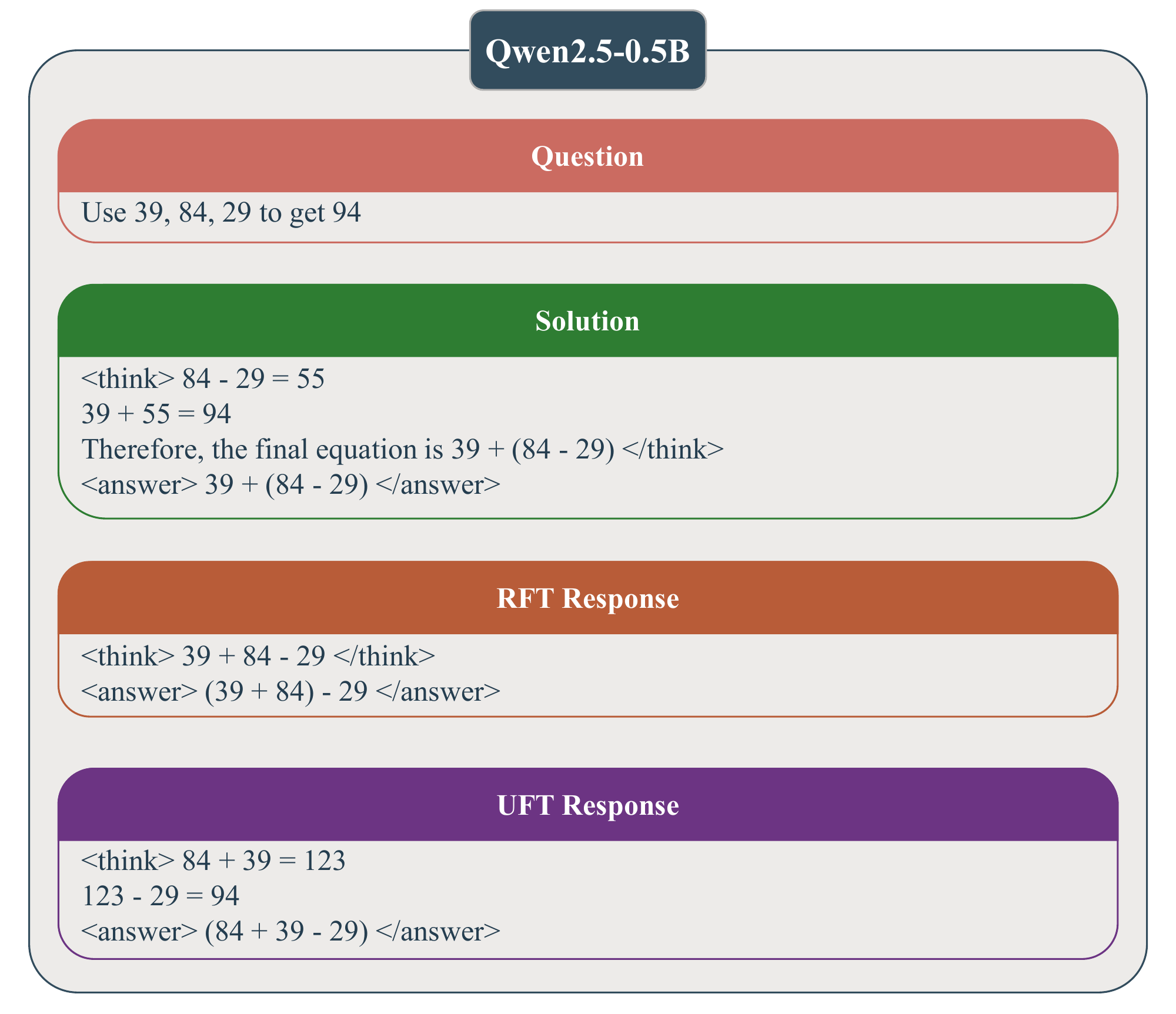}
    \includegraphics[width=0.45\linewidth]{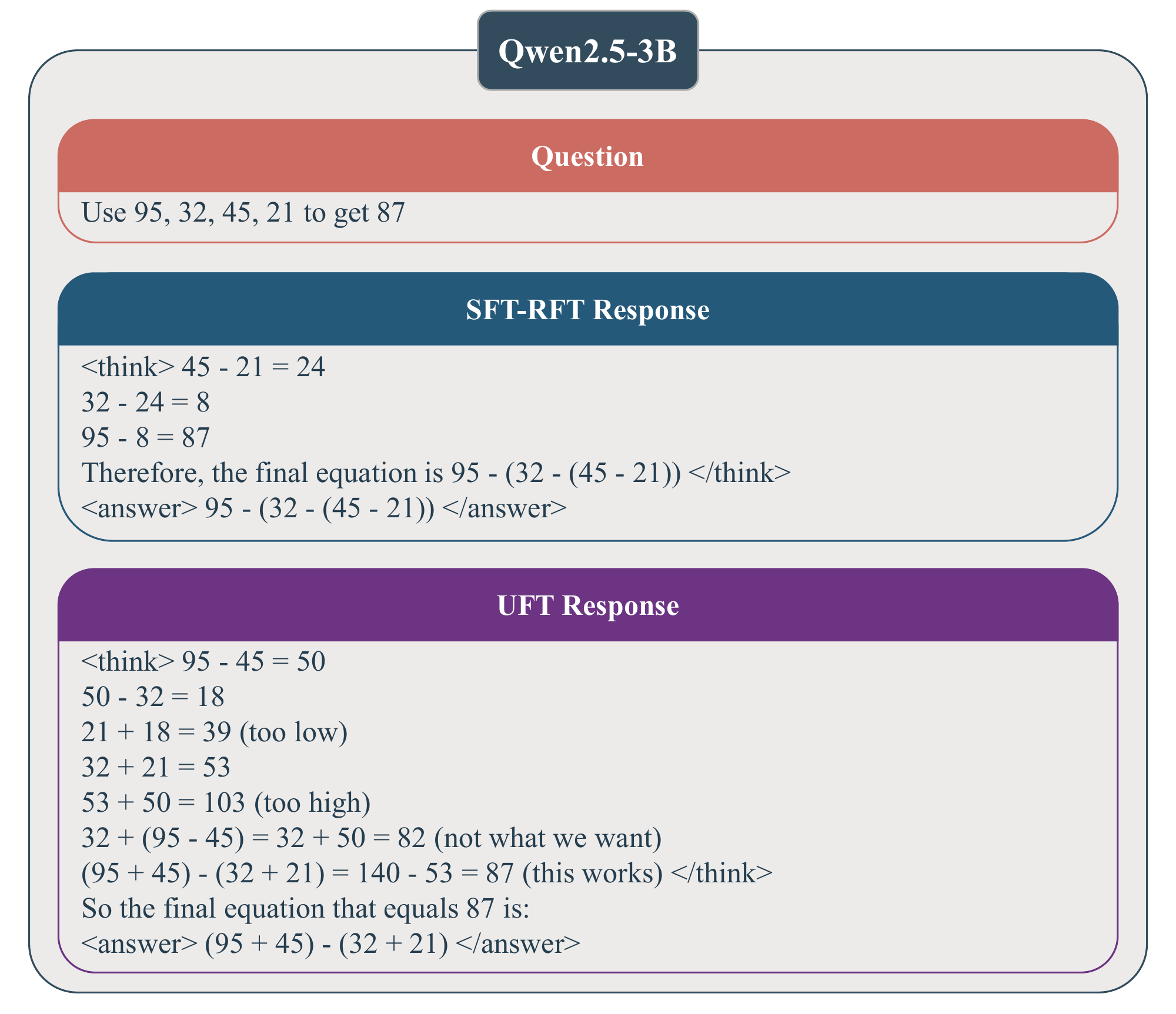}
    \caption{Responses of Qwen2.5-0.5/3B trained by different algorithms.}
    \label{fig:response}
\end{figure}

\Cref{table:full-results} shows the accuracy results across different datasets. For clarity, we report the average accuracy over models trained on three datasets: Countdown, MATH(3,4,5), and Logic.

For smaller models such as Qwen2.5-0.5B, SFT-RFT achieves an accuracy of 7.28\%, compared to only 3.25\% for RFT. In contrast, UFT achieves 9.45\% accuracy, outperforming both.

For larger models such as Qwen2.5-3B, SFT-RFT achieves 17.34\% accuracy, which is significantly lower than RFT's 32.15\%. However, UFT still performs competitively, reaching 30.93\% and closely matching RFT.

In summary, UFT combines the strengths of both SFT and RFT. When the model is small and memorization plays a key role, UFT matches or exceeds SFT's performance. When the model is large and generalization becomes more important, UFT benefits similarly to RFT, achieving comparable accuracy.

\begin{table}[t]
\centering
\scalebox{0.65}{
\rowcolors{3}{gray!15}{white}
\begin{tabular}{ll|cccccccccc}
\toprule
\textbf{Model} & \textbf{Algorithm} & MATH(3,4,5) & AIME24 & AMC & Countdown & Logic & MATH500 & Minerva & Olympiad & GSM8k & Avg. \\
\midrule
 & Base & 3.03 & 0.00 & 0.00 & 0.00 & 0.00 & 1.73 & 0.74 & 0.30 & 7.66 & 1.55 \\
  & SFT & 4.92 & 0.00 & 1.61 & 11.20 & 1.87 & 2.13 & 2.08 & 1.33 & 13.07 & 4.46 \\
  & RFT & 3.78 & 0.00 & 3.21 & 8.30 & 0.00 & 2.47 & 3.80 & 2.57 & 3.87 & 3.25 \\
  Qwen2.5-0.5B & SFT-RFT & 8.69 & 0.00 & 3.61 & \textbf{17.45} & \textbf{7.07} & 2.07 & 4.41 & 2.12 & 16.45 & 7.28 \\
  & $\text{R}^3$ & 9.86 & 0.00 & 6.43 & 9.99 & 4.20 & 3.33 & 5.02 & \textbf{3.11} & 20.09 & 7.36 \\
  & UFT & \textbf{13.18} & 0.00 & \textbf{6.83} & 17.15 & 4.87 & \textbf{5.40} & \textbf{5.76} & 2.77 & \textbf{24.59} & \textbf{9.45} \\
 \midrule
 & Base & 24.51 & \textbf{3.33} & 4.82 & 0.20 & 2.20 & 18.27 & 4.41 & 5.48 & 60.96 & 14.29 \\
  & SFT & 12.47 & 0.00 & 5.62 & 13.48 & 5.33 & 6.40 & 4.53 & 2.62 & 29.74 & 9.36 \\
  & RFT & 24.77 & 2.22 & 9.24 & \textbf{27.86} & 3.00 & 10.53 & 6.86 & 6.47 & 45.69 & 16.08 \\
  Qwen2.5-1.5B & SFT-RFT & 15.72 & 1.11 & 6.83 & 20.51 & 11.13 & 5.00 & 4.41 & 4.59 & 30.02 & 11.70 \\
  & $\text{R}^3$ & 28.12 & 2.22 & 13.65 & 23.57 & \textbf{11.47} & 14.93 & 7.48 & 9.43 & 49.79 & 18.65 \\
  & UFT & \textbf{34.08} & \textbf{3.33} & \textbf{14.86} & 24.54 & 10.07 & \textbf{20.87} & \textbf{8.33} & \textbf{9.68} & \textbf{66.46} & \textbf{22.23} \\
 \midrule
 & Base & 31.45 & 0.00 & 13.25 & 3.81 & 5.60 & 24.53 & 4.78 & 7.70 & 57.85 & 17.13 \\
  & SFT & 24.32 & 0.00 & 10.04 & 15.07 & 10.20 & 16.80 & 5.27 & 5.19 & 45.54 & 15.25 \\
  & RFT & 45.74 & \textbf{4.44} & 24.90 & \textbf{34.08} & \textbf{30.33} & \textbf{31.27} & 12.25 & \textbf{15.65} & \textbf{80.84} & \textbf{32.15} \\
  Qwen2.5-3B & SFT-RFT & 26.50 & 1.11 & 9.64 & 17.61 & 19.60 & 14.07 & 5.76 & 6.77 & 48.22 & 17.34 \\
  & $\text{R}^3$ & 44.01 & 2.22 & 21.29 & 27.12 & 24.80 & 28.00 & 10.91 & 14.57 & 70.20 & 28.02 \\
  & UFT & \textbf{47.04} & 3.33 & \textbf{29.32} & 31.38 & 26.07 & 29.73 & \textbf{12.99} & 14.17 & 74.63 & 30.93 \\
 \midrule
 & Base & 0.00 & 0.00 & 0.00 & 0.00 & 0.00 & 0.00 & 0.00 & 0.00 & 0.08 & 0.01 \\
  & SFT & 1.07 & 0.00 & 0.80 & 13.41 & 3.67 & 0.00 & 0.74 & 0.25 & \textbf{1.87} & 2.49 \\
  & RFT & 0.94 & 0.00 & \textbf{2.41} & 0.00 & 0.00 & \textbf{0.47} & 0.49 & 0.84 & 1.42 & 0.80 \\
  Llama-3.2-1B & SFT-RFT & 0.42 & 0.00 & 0.00 & \textbf{18.68} & \textbf{8.33} & 0.00 & 1.23 & 0.20 & 0.48 & 3.29 \\
  & $\text{R}^3$ & \textbf{1.53} & 0.00 & 1.61 & 9.90 & 0.13 & 0.33 & \textbf{2.94} & \textbf{0.99} & 1.49 & 2.20 \\
  & UFT & 1.17 & 0.00 & 0.00 & 17.87 & 7.40 & 0.07 & 2.82 & 0.74 & 1.14 & \textbf{3.52} \\
 \midrule
 & Base & 0.00 & 0.00 & 0.00 & 0.00 & 0.00 & 0.00 & 0.00 & 0.00 & 0.00 & 0.00 \\
  & SFT & 2.54 & 0.00 & 0.40 & 14.68 & 6.13 & 0.00 & 1.72 & 0.54 & \textbf{7.08} & 3.85 \\
  & RFT & 0.00 & 0.00 & 0.00 & 0.00 & 0.00 & 0.00 & 0.00 & 0.05 & 0.00 & 0.01 \\
  Llama-3.2-3B & SFT-RFT & \textbf{3.16} & 0.00 & 2.41 & 16.05 & 8.87 & 0.07 & \textbf{3.92} & 0.89 & 5.76 & 4.79 \\
  & $\text{R}^3$ & 2.93 & 0.00 & \textbf{3.21} & 17.55 & \textbf{9.93} & 0.87 & 3.06 & \textbf{1.04} & 5.16 & \textbf{5.03} \\
  & UFT & 1.24 & 0.00 & 1.20 & \textbf{17.64} & 6.60 & \textbf{1.13} & 1.10 & 0.30 & 4.12 & 3.72 \\
\bottomrule
\end{tabular}
}
\vspace{5pt}
\caption{Average performance of Qwen2.5-0.5/1.5/3B and Llama-3.2-1/3B across all three training datasets, Countdown, MATH(3,4,5), and Logic.}
\label{table:full-results}
\end{table}

\section{Proof of \Cref{theorem:lowerbound}}
\label{section:proof-lowerbound}

\restate{theorem:lowerbound}

\begin{proof}
Proving the lower bound of exploration is equivalent to the following. Find the maximum $T>0$, such that any algorithm will fail to learn the optimal policy with probability at least $0.5$ within $T$ explorations. Consider the $\binom{B^H}{K}$ possible trees, each associated with a distinct subset of $\cS_H$ of size $K$, where that subset represents the correct solution for that specific tree. At the beginning, we pick an instance from all those possible trees uniformly at random.

During each exploration, the algorithm requests the reward at a node in $\cS_H$. Let $s^{(1)},s^{(2)},\dots,s^{(T)}$ be the leaf node reached at timestep $1,2,\dots T$, which are random variables depending on the randomness of the algorithm. Let $\cS_H^*\coloneqq\cbr{s\in\cS_H\colon \cR(s)=\max_{s'\in\cS_H} \cR(s')}$ be the set of nodes representing correct solutions. Note that given the construction of the instances, $\abr{\cS_H^*}=K$. Then, the probability of reaching one of the correct solutions in $\cS_H^*$ is
\begin{align*}
    \Pr\rbr{\cbr{s^{(t)}}_{t=1}^T\cap \cS_H^*\not=\emptyset}=& \sum_{t=1}^T \Pr\rbr{s^{(t)}\in\cS_H^*\given \cbr{s^{(s)}}_{s=1}^{t-1}\cap \cS_H^*=\emptyset}\Pr\rbr{\cbr{s^{(s)}}_{s=1}^{t-1}\cap \cS_H^*=\emptyset}\\
    \leq& \sum_{t=1}^T \Pr\rbr{s^{(t)}\in\cS_H^*\given \cbr{s^{(s)}}_{s=1}^{t-1}\cap \cS_H^*=\emptyset}.
\end{align*}
Given that we pick $\cS_H^*$ uniformly at random, $\Pr\rbr{s^{(t)}\in\cS_H^*\given \cbr{s^{(s)}}_{s=1}^{t-1}\cap \cS_H^*=\emptyset}=\frac{\abr{\cS_H^*}}{B^H-t+1}$. Therefore,
\begin{align*}
    \Pr\rbr{\cbr{s^{(t)}}_{t=1}^T\cap \cS_H^*\not=\emptyset}\leq& \sum_{t=1}^T \frac{\abr{\cS_H^*}}{B^H-t+1}.
\end{align*}
When $T\leq \frac{B^H}{4\abr{\cS_H^*}}$, we have
\begin{align*}
    \Pr\rbr{\cbr{s^{(t)}}_{t=1}^T\cap \cS_H^*\not=\emptyset}\leq& \sum_{t=1}^T \frac{\abr{\cS_H^*}}{B^H-t+1}\overset{(i)}{\leq} \sum_{t=1}^T \frac{2\abr{\cS_H^*}}{B^H}=\frac{2\abr{\cS_H^*}T}{B^H}\leq \frac{1}{2}.
\end{align*}
$(i)$ uses the fact that $t\leq T\leq \frac{B^H}{4\abr{\cS_H^*}}\leq \frac{B^H}{2}$. Therefore, within $\frac{B^H}{4\abr{\cS_H^*}}$ exploration, the algorithm will fail to find the correct answer with probability at least $0.5$.\qedhere

\end{proof}

\section{Extended Theoretical Justifications}
\label{section:theory-appendix}

In this section, we introduce some additional notations in \Cref{section:extended-preliminary} and then present the theoretically sound UFT in \Cref{section:theory-uft}.

\subsection{Extended Preliminaries}
\label{section:extended-preliminary}

\paragraph{Notation.} For any vector $\bx\in\RR^n$, let $x_i$ be its $i^{th}$ element and $\nbr{\bx}_p$ be the $L_p$-norm, where $\nbr{\bx}$ denotes the $L_2$-norm by default. For any two vectors $\bx,\by\in\RR^n$, let $\inner{\bx}{\by}\coloneqq\sum_{i=1}^n x_i\cdot y_i$ denote their inner product.

\paragraph{Softmax Parameterized Policy.} \Cref{alg:uft-theory} assumes the policy follows softmax parameterization. Formally, the policy $\pi^{\btheta}$ is controlled by $\btheta\in \RR^{|\cS|\times B}$, such that for any $s\in \cS$ and $a\in [B]$,
\begin{align}
    \pi^{\btheta}(a\given s)\coloneqq \frac{\exp(\theta(s, a))}{\sum_{a'=1}^B \exp(\theta(s, a'))}.
\end{align}
The softmax-parameterized policy is also widely adopted in the literature \citep{mei2020global-softmax, agarwal2021theory-softmax, ding2020natural-softmax} to sidestep the complexities of analyzing non-convex neural networks and to keep the focus on the learning algorithm itself.

\subsection{Theoretically Sound UFT}
\label{section:theory-uft}

The full algorithm is shown in \Cref{alg:uft-theory}. In lines 2-3: we sample the hint length and a trajectory starting from the hint. In lines 6-10, we estimate Q-values by sampling an additional trajectory for each state-action pair, which can greatly reduce the variance of sampling. In lines 13-14, we compute the objective function and update the parameters by gradient ascent. In lines 16-17, we estimate the expected reward of each intermediate policy and return the best one.

\begin{algorithm}[t!]
\caption{Theoretically Sound Unified Fine-Tuning}
\label{alg:uft-theory}
\SetAlgoLined
\KwHyper{Learning rate $\eta$, KL-penalty coefficient $\beta$, and total number of steps $T$}
\KwInput{Reference policy parameter $\btheta^{\rm ref}$}
\KwInit{$\btheta^{(0)}\gets\btheta^{\rm ref}$}

\For{$t=0, 1, \cdots, T-1$}{
Sample $l^{(t)}\sim {\rm Uniform}(0,1,2,\cdots,H-1,H)$

\tcp{In fact, any distribution with full support on $\cbr{0,1,2,\cdots,H-1,H}$ is fine. We choose the uniform distribution for simplicity}

Sample trajectory $\rbr{s^{(t)}_h}_{h=l^{(t)}}^H\sim \pi^{\btheta^{(t)}}$, where $s^{(t)}_{l^{(t)}}=s^*_{l^{(t)}}$

\For{$h=l^{(t)},l^{(t)}+1,\cdots H-1$}{
\For{$a=1,2,\cdots, B$}{ \tcp{Group sampling}
Sample trajectory $\rbr{s^{(t),a}_{h'}}_{h'=h+1}^H\sim \pi^{\btheta^{(t)}}$ starting from $s^{(t),a}_{h+1}=\cT(s^{(t)}_h,a)$

$\tilde{Q}^{(t)}\rbr{s^{(t)}_h,a}\gets\cR\rbr{s^{(t),a}_H}$
}
\For{$a=1,2,\cdots, B$}{

$\tilde{A}^{(t)}\rbr{s^{(t)}_h,a} \gets \tilde{Q}^{(t)}\rbr{s^{(t)}_h,a} - \sum_{a=1}^B \pi^{\btheta^{(t)}}\rbr{a\given s^{(t)}_h} \tilde{Q}^{(t)}\rbr{s^{(t)}_h,a}$

}
}

\tcp{$\tilde A^{(t)}(s,\cdot)\equiv 0$ for any $s$ off the trajectory $\rbr{s^{(t)}_h}_{h=l^{(t)}}^H$}

\begin{align*}
    \cJ^{(t)}\gets& \sum_{h=l^{(t)}}^{H-1} \sum_{a=1}^B \pi^{\btheta^{(t)}}\rbr{a\given s^{(t)}_h} \tilde{A}^{(t)}\rbr{s^{(t)}_h,a}\\
    &  - \beta \sum_{h=l^{(t)}}^{H-1} \KL\rbr{\pi^{\btheta^{(t)}}\rbr{\cdot\given s^{(t)}_h}\|\pi^{\btheta^{\rm ref}}\rbr{\cdot\given s^{(t)}_h}} + \beta\sum_{h=0}^{l^{(t)}-1} \log \pi^{\btheta^{(t)}}\rbr{a_h^*\given s_h^*}
\end{align*}

\begin{align}
    \btheta^{(t+1)}\gets \btheta^{(t)} + \eta \nabla_{\pi} \cJ^{(t)} \label{eq:theory-update-rule}
\end{align}
}

Estimate $\tilde V^{\pi^{\btheta^{(t)}}}(s_{\rm root})=\frac{1}{N}\sum_{n=1}^N \cR\rbr{\tilde s_H^{(t), n}}$ by sampling trajectories $\tilde s_0^{(t), n}=s_{\rm root}$ and $ \rbr{\tilde s_h^{(t), n}}_{h=0}^{H}\sim \pi^{\btheta^{(t)}}$, where $N=\frac{72\log \rbr{14(T+1)}}{\Delta^2}$

$\tilde t^*=\argmax_{t\in \cbr{0,1,\cdots,T}} \tilde V^{\pi^{\btheta^{(t)}}}(s_{\rm root})$

\KwReturn{$\pi^{\btheta^{(\tilde t^*)}}$}
\end{algorithm}

Note that \Cref{alg:uft-theory} differs slightly from the UFT shown in \Cref{alg:uft}. While \Cref{alg:uft} leaves the choice of the reinforcement learning algorithm unspecified, \Cref{alg:uft-theory} explicitly defines the trajectory rolling mechanism and update rule for concrete theoretical analysis. Further, \Cref{alg:uft-theory} assumes a softmax-parameterized policy, whereas \Cref{alg:uft} imposes no constraints on the policy network architecture.

\section{Proof of \Cref{theorem:convergence}}
\label{section:proof-convergence}

In this section, for notational simplicity, we use $\pi^{(t)}$ to denote $\pi^{\btheta^{(t)}}$ for any $t\in \cbr{0, 1, \cdots, T}$. Moreover, for any $t\in [T]$, we define $\tilde A^{(t-1)}(s, a)=\tilde Q^{(t-1)}(s, a)=0$ for those nodes $s$ off the sampled path $\rbr{s^{(t)}_{h}}_{h=l^{(t)}}^H$ at timestep $t$.

\begin{theorem}[Formal]
    Consider \Cref{alg:uft-theory}. When $\beta\leq \frac{\Delta}{12\rbr{H+1}^2\rbr{\log B + 2\nbr{\btheta^{\rm ref}}_\infty}}$, the pass @ 1 accuracy $\Pr_{\pi^{\btheta^{(\tilde t^*)}}}\rbr{\text{pass @ 1}}$ of policy $\pi^{\btheta^{(\tilde t^*)}}$ satisfies
    \begin{align}
        \Pr\nolimits_{\pi^{\btheta^{(\tilde t^*)}}}\rbr{\text{pass @ 1}}\geq 0.5,
    \end{align}
    when
    \begin{align}
        T=\rbr{\frac{(H+1)^2 \rbr{\log B + 2\nbr{\btheta^{\rm ref}}_\infty + 7}}{\Delta/12}}^2
    \end{align}
    and explores no more than $(BH+N)T$ leaf nodes in $\cS_H$.
\end{theorem}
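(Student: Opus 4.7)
The plan is to reduce pass@1 to the value gap, then to a KL-based potential along the optimal trajectory, and to show that the UFT update drives this potential down polynomially fast.

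First, it suffices to show $V^{\pi^{(\tilde t^*)}}(s_{\rm root})\ge V^*-\Delta/2$ with constant probability: by the sub-optimality gap, any trajectory ending at a non-optimal leaf loses at least $\Delta$, so $V^{\pi}(s_{\rm root})\le V^* - \Delta\,(1-\Pr_\pi(\text{pass@1}))$, and the above bound forces pass@1 $\ge 1/2$. Since $\pi^*$ is deterministic and reaches a unique optimal leaf $s_H^*$, $\Pr_\pi(\text{pass@1})\ge \prod_{h=0}^{H-1} \pi(a_h^*\given s_h^*) = \exp(-\Phi(\pi))$ where $\Phi(\pi)\coloneqq \sum_{h=0}^{H-1}\KL\rbr{\pi^*(\cdot\given s_h^*)\,\|\,\pi(\cdot\given s_h^*)}$. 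So it is enough to exhibit some iterate $t\le T$ with $\Phi^{(t)}\coloneqq \Phi(\pi^{(t)})\le \log 2$. Hoeffding's inequality applied to the $N=\cO(\log(T)/\Delta^2)$ rollouts in lines~16--17 ensures that $\tilde t^*$ selects such an iterate up to an $\cO(\Delta)$ additive error in value, so it remains to bound $\min_{t\le T}\Phi^{(t)}$.

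Next, I would derive an expected descent inequality for $\Phi^{(t)}$. The UFT gradient splits into a value part, a KL-to-reference regularizer, and an SFT part $\beta\sum_{h=0}^{l-1}\nabla\log\pi(a_h^*\given s_h^*)$. Averaging over $l^{(t)}\sim\mathrm{Uniform}\cbr{0,\dots,H}$, the SFT gradient in expectation becomes $\frac{\beta}{H+1}\sum_{h=0}^{H-1}(H-h)\,\nabla\log\pi(a_h^*\given s_h^*)$, i.e.\ the gradient of a cross-entropy with deterministic target $\pi^*$ at each state on the optimal path. Under the softmax parameterization this cross-entropy is convex and $\cO(1)$-smooth in the logits at each state, and $\nbr{\nabla\sbr{-\log \pi(a^*\given s)}}^2\ge (1-\pi(a^*\given s))^2$. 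Provided the logits remain controlled by the KL-to-reference term, this yields an expected per-step contraction of the form $\EE\sbr{\Phi^{(t+1)}}\le \Phi^{(t)} - \Omega\rbr{\eta\beta/(H+1)^2}\Phi^{(t)} + \cO(\eta^2 H)$ whenever $\Phi^{(t)}>\log 2$.

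The main difficulty is containing the value-gradient contribution, which has unit coefficient (not scaled by $\beta$) and may push the logits in directions that \emph{increase} $\Phi^{(t)}$. I would exploit that $\tilde A^{(t)}(s,\cdot)$ is an unbiased, $[-1,1]$-bounded advantage estimator produced by the group-sampling block, and bound its per-step effect on $\Phi^{(t)}$ via a martingale decomposition. The condition $\beta\le \Delta/[12(H+1)^2(\log B + 2\nbr{\btheta^{\rm ref}}_\infty)]$ is chosen precisely so that the SFT drift dominates the value-gradient perturbation as long as $\Phi^{(t)}>\log 2$; iterating the contraction for $T=\Theta\rbr{((H+1)^2(\log B+\nbr{\btheta^{\rm ref}}_\infty)/\Delta)^2}$ steps and applying Freedman's inequality to the martingale of noise terms yields $\min_{t\le T}\Phi^{(t)}\le \log 2$ with constant probability. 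Combining with the $BH$ rollouts per iteration for estimating $\tilde Q^{(t)}$ and the $N$ validation rollouts gives the stated $(BH+N)T$ exploration bound. The chief obstacle I anticipate is the coupled induction required to simultaneously control, across all $t\le T$, (i) the potential $\Phi^{(t)}$ along the optimal path, and (ii) the drift $\nbr{\btheta^{(t)}-\btheta^{\rm ref}}_\infty$, since the contraction constant in (i) depends on the softmax smoothness, which depends on (ii). The KL-to-reference term keeps (ii) bounded, but only for $\beta$ small relative to $\Delta$ in exactly the quantitative way the theorem demands, so making the bootstrap rigorous requires uniform concentration of the advantage estimates over $t\le T$ together with a careful choice of $\eta$ to balance stochastic variance against the deterministic contraction.
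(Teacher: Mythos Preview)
Your reduction in the first paragraph (value gap $\to$ pass@1, and the Hoeffding step for selecting $\tilde t^*$) is fine and matches the paper. The rest of the plan, however, is built on a misreading of the role of $\beta$ and will not close.

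You treat the SFT log-likelihood term (coefficient $\beta$) as the \emph{driver} of progress and the value/advantage gradient (coefficient $1$) as a perturbation to be absorbed by a martingale argument. But the theorem's hypothesis is an \emph{upper} bound $\beta\le \Delta/[12(H{+}1)^2(\log B+2\|\btheta^{\rm ref}\|_\infty)]$: smaller $\beta$ makes the SFT drift weaker, not stronger, so the claim that ``the condition on $\beta$ is chosen so that the SFT drift dominates the value-gradient perturbation'' is backwards. Quantitatively, your contraction $\EE[\Phi^{(t+1)}]\le (1-\Omega(\eta\beta/(H{+}1)^2))\Phi^{(t)}+\cO(\eta^2 H)$ would require order $(H{+}1)^2/(\eta\beta)$ steps, which with $\eta=1/\sqrt{T}$ and the stated $T,\beta$ is far larger than $T$. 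Moreover, the advantage estimator $\tilde A^{(t)}(s,\cdot)$ is not mean-zero noise: its drift is the true advantage $A^{\pi^{(t)}}(s_h^*,\cdot)$ under the \emph{current} policy, and since $Q^{\pi^{(t)}}\ne Q^{\pi^*}$ this drift need not point toward $a_h^*$ even at states on the optimal path. So neither the ``dominates'' nor the ``martingale'' part of the value-gradient argument goes through.

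The paper's proof inverts the roles. It runs a per-state \emph{online regret} analysis: for every $s$ the update is recognized as a mirror-descent/FTRL step, yielding $\sum_{t}\langle \tilde A^{(t-1)}(s,\cdot),\pi^*(\cdot\mid s)-\pi^{(t-1)}(\cdot\mid s)\rangle\le (1/\eta+\beta T)\,\KL(\pi^*\|\pi^{\rm ref})+2\eta T$; the SFT log-likelihood update (case (II)) is shown only to \emph{help} this bound, and the $\beta$ upper bound is there solely to cap the bias $\beta\,\KL(\pi^*\|\pi^{\rm ref})$ from the KL-to-reference regularizer. A martingale concentration then converts $\tilde A$ to $Q^{\pi^{(t-1)}}$. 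Finally, the performance difference lemma sums these per-state regrets over the optimal path; the crucial role of the hint is that the uniform sampling of $l^{(t)}$ guarantees each $s_h^*$ is the trajectory start with probability $1/(H{+}1)$, so the importance ratio $\mu^{\pi^*}(s_h^*)/\Pr(s_h^*\text{ visited})\le H{+}1$ rather than exponentially large. This gives an average-value-gap bound, hence a best iterate with $V^*-V^{\pi^{(t^*)}}\le \Delta/6$, and the validation step picks it up. No descent of $\Phi^{(t)}$, no bootstrap on $\|\btheta^{(t)}-\btheta^{\rm ref}\|_\infty$, and no need for the SFT term to dominate anything.
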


\begin{proof}

The update rule can be divided into two steps: (i) Use the concentration bound to get a high-probability bound on $\inner{Q^{\pi^{(t-1)}}(s, \cdot)}{\pi^{*}(\cdot\given s) - \pi^{(t-1)}(\cdot\given s)}$ (cf. \Cref{section:concentration}); (ii) Convert the difference in each node to the $V^* - V^{\pi^{(t-1)}}(s_{\rm root})$ by the regret decomposition lemma (cf. \Cref{section:difference-decomposition}); (iii) Convert the bound on expected reward to success rate (cf. \Cref{section:compute-probability}).

\subsection{Concentration Bound}
\label{section:concentration}

For any height $h\in \cbr{0}\cup [H-1]$, state $s\in\cS_h$, and action $a\in[B]$, we can define the Q-value of the state-action pair $(s,a)\in\cS\times[B]$ when following policy $\pi$ as
\begin{align}
    Q^{\pi}(s,a) \coloneqq \EE_{s_h=s,\rbr{s_{h'}}_{h'=h}^H\sim \pi}\sbr{\cR\rbr{s_H}}.
\end{align}

Then, for any $s\in \cS\setminus\cS_H$ and $t\in [T]$, we have
\begin{align}
    \EE\sbr{\tilde Q^{(t-1)}(s,a)} = \Pr\rbr{s\in \cbr{s^{(t)}_{h}}_{h=l^{(t)}}^H}\cdot Q^{\pi^{(t-1)}}(s,a),\label{eq:unbiased-estimator}
\end{align}
where the expectation is taken over the probability of sampling trajectories in \Cref{alg:uft-theory}. Next, we will introduce Lemma 5.3 in \citet{liu2024policy-gradient-EFG}.

\begin{proposition}
\label{proposition:concentration}
    Let $M,\tilde M\geq 0$ be the constants such that $\abr{f^{(t)}(\bx)-f^{(t)}(\bx')}\leq M$ and $\abr{\tilde f^{(t)}(\bx)-\tilde f^{(t)}(\bx')}\leq\tilde M$ for any $t\in [T]$ and $\bx,\bx'\in\cC$, where $\cC$ is a convex set. If for any $\bx\in\cC$, we have
    \begin{align*}
        \EE\sbr{\tilde f^{(t)}(\bx)\given \tilde f^{(1)}, \tilde f^{(2)},\cdots, \tilde f^{(t-1)}}=f^{(t)}(\bx),
    \end{align*}
    and $\bx^{(t)}$ is deterministically influenced by $\tilde f^{(1)}, \tilde f^{(2)},\cdots, \tilde f^{(t-1)}$, then for any $\delta\in (0, 1)$ and $\bx\in \cC$, we have
    {\small
    \begin{align*}
        \Pr\rbr{\sum_{t=1}^T \rbr{f^{(t)}(\bx)-f^{(t)}(\bx^{(t)})}\leq \sum_{t=1}^T \rbr{\tilde f^{(t)}(\bx)-\tilde f^{(t)}(\bx^{(t)})} + \rbr{M+\tilde M}\sqrt{2T\log \frac{1}{\delta}}}\geq 1-\delta.
    \end{align*}}
\end{proposition}
For any $h<H$ and $s\in\cS_h$, let $f^{(t)}(\bx)=\Pr\rbr{s\in \cbr{s^{(t-1)}_{h}}_{h=l^{(t-1)}}^H}\inner{Q^{\pi^{(t-1)}}(s, \cdot)}{\bx}$, where $f^{(t)}\colon \Delta^{B}\to [0, 1]$ since each element of $Q^{(t-1)}(s, \cdot)$ is bounded by $[0,1]$ by definition. Therefore, $M$ in \Cref{proposition:concentration} is $1$. Similarly, let $\tilde f^{(t)}(\bx)=\inner{\tilde Q^{(t-1)}(s, \cdot)}{\bx}$ and we have $\tilde M=1$. Therefore, by \Cref{eq:unbiased-estimator}, \Cref{proposition:concentration}, and \Cref{lemma:update-rule-analysis}, for any $\delta\in(0,1)$, with probability at least $1-\delta$, we have
\begin{align*}
    &\sum_{t=1}^{T} \Pr\rbr{s\in \cbr{s^{(t)}_{h}}_{h=l^{(t)}}^H} \inner{Q^{\pi^{(t-1)}}(s, \cdot)}{\pi^{*}(\cdot\given s) - \pi^{(t-1)}(\cdot\given s)} \\
    \leq&\sum_{t=1}^{T} \inner{\tilde Q^{(t-1)}(s, \cdot)}{\pi^{*}(\cdot\given s) - \pi^{(t-1)}(\cdot\given s)} + 2 \sqrt{2T\log \frac{1}{\delta}}\\
    \overset{(i)}{=}& \sum_{t=1}^{T} \inner{\tilde A^{(t-1)}(s, \cdot)}{\pi^{*}(\cdot\given s) - \pi^{(t-1)}(\cdot\given s)} + 2 \sqrt{2T\log \frac{1}{\delta}}.
\end{align*}
$(i)$ is because
\begin{align*}
    &\inner{\tilde A^{(t-1)}(s, \cdot)}{\pi^{*}(\cdot\given s) - \pi^{(t-1)}(\cdot\given s)}\\
    =&\inner{\tilde Q^{(t-1)}(s, \cdot)}{\pi^{*}(\cdot\given s) - \pi^{(t-1)}(\cdot\given s)} \\
    &+ \sum_{a=1}^B \pi^{(t-1)}(a\given s) \tilde Q^{(t-1)}(s, a) \sum_{a=1}^B \rbr{\pi^{*}(a\given s) - \pi^{(t-1)}(a\given s)}\\
    =&\inner{\tilde Q^{(t-1)}(s, \cdot)}{\pi^{*}(\cdot\given s) - \pi^{(t-1)}(\cdot\given s)}.
\end{align*}

By the update rule of \Cref{alg:uft-theory}, we have the following lemma.
\begin{lemma}
\label{lemma:update-rule-analysis}
    Consider \Cref{alg:uft-theory}. For any node $s\in \cS\setminus\cS_H$, we have
    \begin{align*}
        &\sum_{t=1}^{T} \inner{\tilde A^{(t-1)}(s, \cdot)}{\pi^{*}(\cdot\given s) - \pi^{(t-1)}(\cdot\given s)}\\
        \leq& \rbr{\frac{1}{\eta} + \beta T}\KL\rbr{\pi^{*}(\cdot\given s) \| \pi^{\btheta^{\rm ref}}(\cdot\given s)} + 2\eta T.
    \end{align*}
\end{lemma}
The proof is postponed to \Cref{section:omitted-proofs}. \Cref{lemma:update-rule-analysis} gives us an upper bound on the accumulated difference between our policy $\pi^{(t-1)}$ and the optimal policy $\pi^*$. Therefore,
\begin{align*}
    &\sum_{t=1}^{T} \Pr\rbr{s\in \cbr{s^{(t)}_{h}}_{h=l^{(t)}}^H} \inner{Q^{\pi^{(t-1)}}(s, \cdot)}{\pi^{*}(\cdot\given s) - \pi^{(t-1)}(\cdot\given s)}\\
    \leq& \sum_{t=1}^T \inner{\tilde A^{(t-1)}(s, \cdot)}{\pi^{*}(\cdot\given s) - \pi^{(t-1)}(\cdot\given s)} + 2 \sqrt{2T\log \frac{1}{\delta}}\\
    \leq& \rbr{\frac{1}{\eta} + \beta T}\KL\rbr{\pi^{*}(\cdot\given s) \| \pi^{\btheta^{\rm ref}}(\cdot\given s)} + 2\eta T + 2 \sqrt{2T\log \frac{1}{\delta}}.
\end{align*}

\subsection{Difference Decomposition}
\label{section:difference-decomposition}

Let $\mu^{\pi}(s)$ be the probability of reaching state $s$ from the root by following policy $\pi$. Hence, $\mu^{\pi}(s_{\rm root})=1$. For any $s\in\cS\setminus\cS_H$ and action $a\in [B]$, $\mu^{\pi}\rbr{\cT(s,a)}$ can be recursively defined as
\begin{align}
    \mu^{\pi}\rbr{\cT(s,a)} = \mu^{\pi}(s) \cdot \pi(s,a).
\end{align}

In the following, we will introduce \Cref{lemma:decomposition}, which is a special case of the regret decomposition lemma (Lemma 5.1) in \citet{DBLP:conf/iclr/LiuOYZ23-power-reg}. Specifically, it is the regret decomposition lemma for a two-player zero-sum extensive-form game without chance nodes\footnote{Chance nodes represent the randomness of the game, such as rolling a dice.}, and the second player's action sets at all nodes are of size $1$.
\begin{lemma}
\label{lemma:decomposition}
    For any sequence of policies $\pi^{(1)},\pi^{(2)},\cdots, \pi^{(T)}$ and policy $\pi$, we have
    \begin{align*}
        \sum_{t=1}^T \rbr{V^{\pi}(s_{\rm root}) - V^{\pi^{(t)}}(s_{\rm root})} = \sum_{s\in \cS\setminus\cS_H} \mu^{\pi}(s) \sum_{t=1}^T \inner{Q^{\pi^{(t)}}(s, \cdot)}{\pi(\cdot\given s) - \pi^{(t)}(\cdot\given s)}.
    \end{align*}
\end{lemma}
\Cref{lemma:decomposition} can also be viewed as the performance difference lemma in reinforcement learning \citep{kakade2002approximately-performance-difference-lemma} for a tree-shape Markov decision process. For completeness, we also provide the proof at the end of this section.

By letting $\pi^{(t)}=\pi^{(t-1)}$ for any $t\in [T]$ and $\pi=\pi^*$, we have
\begin{align*}
    &\sum_{t=1}^T \rbr{V^* - V^{\pi^{(t-1)}}(s_{\rm root})}\\
    =& \sum_{s\in \cS\setminus\cS_H} \mu^{\pi^*}(s) \sum_{t=1}^T \inner{Q^{\pi^{(t-1)}}(s, \cdot)}{\pi^*(\cdot\given s) - \pi^{(t-1)}(\cdot\given s)}\\
    \overset{(i)}{=}& \sum_{s\in \cbr{s_0^*,s_1^*,\cdots,s_{H-1}^*}} \mu^{\pi^*}(s) \sum_{t=1}^T \inner{Q^{\pi^{(t-1)}}(s, \cdot)}{\pi^*(\cdot\given s) - \pi^{(t-1)}(\cdot\given s)}\\
    =& \sum_{s\in \cbr{s_0^*,s_1^*,\cdots,s_{H-1}^*}} \sum_{t=1}^T \frac{\mu^{\pi^*}(s)}{\Pr\rbr{s\in \cbr{s^{(t)}_{h}}_{h=l^{(t)}}^H}} \Pr\rbr{s\in \cbr{s^{(t)}_{h}}_{h=l^{(t)}}^H} \\
    &\cdot\inner{Q^{\pi^{(t-1)}}(s, \cdot)}{\pi^*(\cdot\given s) - \pi^{(t-1)}(\cdot\given s)}.
\end{align*}
$(i)$ uses the fact that $\pi^*$ is deterministic such that $\mu^{\pi^*}(s)>0$ only when $s\in \cbr{s_0^*,s_1^*,\cdots,s_H^*}$.
Since $s_{l^{(t)}}^{(t)}$ is sampled from $\cbr{s_0^*,s_1^*,\cdots,s_H^*}$ uniformly, for any $s\in \cbr{s_0^*,s_1^*,\cdots,s_H^*}$, we have
\begin{align*}
    \Pr\rbr{s\in \cbr{s^{(t)}_{h}}_{h=l^{(t)}}^H}\geq\Pr\rbr{s=s_{l^{(t)}}^{(t)}}= \frac{1}{H+1}.
\end{align*}
Therefore, $\frac{\mu^{\pi^*}(s)}{\Pr\rbr{s\in \cbr{s^{(t)}_{h}}_{h=l^{(t)}}^H}}\leq H+1$ and we have
\begin{align*}
    &\sum_{t=1}^T \rbr{V^* - V^{\pi^{(t-1)}}(s_{\rm root})}\\
    \leq& \sum_{h=0}^{H-1}\frac{\mu^{\pi^*}(s_h^*)}{\Pr\rbr{s_h^*\in \cbr{s^{(t)}_{h}}_{h=l^{(t)}}^H}}\rbr{\rbr{\frac{1}{\eta} + \beta T}\KL\rbr{\pi^{*}(\cdot\given s_h^*) \| \pi^{\btheta^{\rm ref}}(\cdot\given s_h^*)} + 2\eta T + 2 \sqrt{2T\log \frac{1}{\delta}}}\\
    \leq& (H+1) \sum_{h=0}^{H-1} \rbr{\rbr{\frac{1}{\eta} + \beta T}\KL\rbr{\pi^{*}(\cdot\given s_h^*) \| \pi^{\btheta^{\rm ref}}(\cdot\given s_h^*)} + 2\eta T + 2 \sqrt{2T\log \frac{1}{\delta}}}.
\end{align*}
Next, we can bound $\KL\rbr{\pi^{*}(\cdot\given s_h^*) \| \pi^{\btheta^{\rm ref}}(\cdot\given s_h^*)}$ by the following lemma.
\begin{lemma}
\label{lemma:KL-bound-infinite-norm}
    For any $h\in \cbr{0,1,\cdots, H-1}$, we have
    \begin{align*}
        \KL\rbr{\pi^{*}(\cdot\given s_h^*) \| \pi^{\btheta^{\rm ref}}(\cdot\given s_h^*)}\leq \log B + 2\nbr{\btheta^{\rm ref}}_\infty.
    \end{align*}
\end{lemma}
The proof is postponed to \Cref{section:omitted-proofs}.

Therefore, by taking $\eta=\frac{1}{\sqrt T}$, we have
\begin{align*}
    &\sum_{t=1}^T \rbr{V^* - V^{\pi^{(t-1)}}(s_{\rm root})}\\
    \leq& \rbr{H+1}^2 \rbr{\rbr{\log B + 2\nbr{\btheta^{\rm ref}}_\infty}\sqrt{T} + 2\sqrt{T} + 2 \sqrt{2T\log \frac{1}{\delta}}}\\
    &+\beta T (H+1)\sum_{h=0}^{H-1} \KL\rbr{\pi^{*}(\cdot\given s_h^*) \| \pi^{\btheta^{\rm ref}}(\cdot\given s_h^*)}.
\end{align*}

Because $V^* - V^{\pi^{(t-1)}}(s_{\rm root})\geq 0$ for any $t\in [T]$, according to pigeon hole principle, there must exist $t^*\in \cbr{0,1,\dots,T}$ such that
\begin{align*}
    &V^* - V^{\pi^{(t^*)}}(s_{\rm root})\\
    \leq& \frac{\rbr{H+1}^2 \rbr{\rbr{\log B + 2\nbr{\btheta^{\rm ref}}_\infty} + 2  + 2\sqrt{2\log \frac{1}{\delta}}}}{\sqrt T} \\
    &+ \beta (H+1)\sum_{h=0}^{H-1} \KL\rbr{\pi^{*}(\cdot\given s_h^*) \| \pi^{\btheta^{\rm ref}}(\cdot\given s_h^*)}.
\end{align*}
For any $\epsilon > \beta (H+1)\sum_{h=0}^{H-1} \KL\rbr{\pi^{*}(\cdot\given s_h^*) \| \pi^{\btheta^{\rm ref}}(\cdot\given s_h^*)}$, it takes
\begin{align*}
    \rbr{\frac{\rbr{H+1}^2 \rbr{\log B + 2\nbr{\btheta^{\rm ref}}_\infty + 2  + 2\sqrt{2\log \frac{1}{\delta}}}}{\epsilon - \beta (H+1) \sum_{h=0}^{H-1} \KL\rbr{\pi^{*}(\cdot\given s_h^*) \| \pi^{\btheta^{\rm ref}}(\cdot\given s_h^*)}}}^2
\end{align*}
iterations to satisfy $V^* - V^{\pi^{(t^*)}}(s_{\rm root})\leq \epsilon$.
    
Recall that $\Delta>0$ is the sub-optimality gap. By picking $\epsilon=\frac{\Delta}{6}$, $\delta=\frac{1}{8}$, and $\beta\leq \frac{\Delta}{12\rbr{H+1}^2\rbr{\log B + 2\nbr{\btheta^{\rm ref}}_\infty}}$, to get $\epsilon$ accuracy with probability $1-\delta$, we need
\begin{align*}
    T = \rbr{\frac{(H+1)^2 \rbr{\log B + 2\nbr{\btheta^{\rm ref}}_\infty + 7}}{\Delta/12}}^2
\end{align*}
iterations, which implies $T\leq\cO\rbr{\frac{H^4 \rbr{\log B}^2}{\Delta^2}}$. Since $\cO(B\cdot H)$ leaf nodes are explored at each iteration, the number of leaf nodes explored during training is $\cO(B\cdot H\cdot T)\leq \cO\rbr{B\frac{H^5 \rbr{\log B}^2}{\Delta^2}}$. 

\subsection{Compute Probability}
\label{section:compute-probability}

To find $t^*$, we need to estimate $V^{\pi^{\btheta^{(t)}}}$ for all $t\in\cbr{0, 1, \cdots, T}$ by sampling trajectories. By sampling a trajectory from $\pi^{\btheta^{(t)}}$, we can get a random variable from $\text{Bernoulli}\rbr{\Pr^{\rm cond}\nolimits_{\pi^{(t)}}\rbr{\text{pass @ 1}}}$ representing whether the trajectory reaches the correct solution. Then, by Hoeffding's inequality, by sampling $N$ trajectories, we have
\begin{align}
    \Pr\rbr{\abr{\tilde V^{\pi^{\btheta^{(t)}}}(s_{\rm root}) - V^{\pi^{\btheta^{(t)}}}(s_{\rm root})}\leq \frac{\Delta}{12}}\leq 2 \exp\rbr{-\frac{N\Delta^2}{72}}\overset{(i)}{=}\frac{1}{7(T+1)}.\label{eq:estimate-V-hoeffding}
\end{align}
$(i)$ is by definition of $N$ in \Cref{alg:uft-theory}. By union bound, for any $t\in\cbr{0, 1, \cdots, T}$, $\abr{\tilde V^{\pi^{\btheta^{(t)}}}(s_{\rm root}) - V^{\pi^{\btheta^{(t)}}}(s_{\rm root})} \leq \frac{\Delta}{12}$ holds with probability at least $1-\frac{T+1}{7(T+1)}=\frac{6}{7}$. Therefore,
\begin{align*}
    V^{\pi^{\btheta^{(\tilde t^*)}}}(s_{\rm root})\geq \tilde V^{\pi^{\btheta^{(\tilde t^*)}}}(s_{\rm root}) - \frac{\Delta}{12} \geq& \tilde V^{\pi^{\btheta^{(t^*)}}}(s_{\rm root}) - \frac{\Delta}{12} \\
    \geq& V^{\pi^{\btheta^{(t^*)}}}(s_{\rm root}) - \frac{\Delta}{6}\geq V^* - \epsilon - \frac{\Delta}{6} = V^* -\frac{\Delta}{3}.
\end{align*}
Recall that $\Pr\nolimits_{\pi^{(\tilde t^*)}}\rbr{\text{pass @ 1}}$ is the pass @ 1 accuracy of policy $\pi^{(\tilde t^*)}$. In the following, we will use $\Pr^{\rm cond}$ as a shorthand of $\Pr\rbr{\cdot\given V^{\pi^{(\tilde t^*)}}(s_{\rm root}) \geq V^*-\frac{\Delta}{3}}$.
\begin{align*}
    \Pr^{\rm cond}\nolimits_{\pi^{(\tilde t^*)}}\rbr{\text{pass @ 1}}=&\Pr^{\rm cond}\nolimits_{s_0=s_{\rm root},\rbr{s_h}_{h=0}^H \sim \pi^{(\tilde t^*)}}\rbr{\cR(s_H)=\max_{s_H'\in\cS_H} \cR(s_H')}\\
    =&\Pr^{\rm cond}\nolimits_{s_0=s_{\rm root},\rbr{s_h}_{h=0}^H \sim \pi^{(\tilde t^*)}}\rbr{\cR(s_H)=V^*}.
\end{align*}
Furthermore,
\begin{align*}
    V^*-\frac{\Delta}{3}\leq V^{\pi^{(\tilde t^*)}}(s_{\rm root})=&\EE_{s_0=s_{\rm root},\rbr{s_h}_{h=0}^H \sim \pi^{(\tilde t^*)}}\sbr{\cR(s_H)}\\
    \leq& \Pr^{\rm cond}\nolimits_{s_0=s_{\rm root},\rbr{s_h}_{h=0}^H \sim \pi^{(\tilde t^*)}}\rbr{\cR(s_H)=V^*} V^* \\
    &+ \rbr{1- \Pr^{\rm cond}\nolimits_{s_0=s_{\rm root},\rbr{s_h}_{h=0}^H \sim \pi^{(\tilde t^*)}}\rbr{\cR(s_H)=V^*}} \rbr{V^*-\Delta}.
\end{align*}
By combining all pieces together, we have
\begin{align*}
    &\Pr^{\rm cond}\nolimits_{\pi^{(\tilde t^*)}}\rbr{\text{pass @ 1}} V^* + \rbr{1- \Pr^{\rm cond}\nolimits_{\pi^{(\tilde t^*)}}\rbr{\text{pass @ 1}}} \rbr{V^*-\Delta}\\
    \geq& V^*-\frac{\Delta}{3},
\end{align*}
which implies that $\Pr^{\rm cond}\nolimits_{\pi^{(\tilde t^*)}}\rbr{\text{pass @ 1}}\geq\frac{2}{3}$. 

Finally,
\begin{align*}
    &\Pr\nolimits_{\pi^{(\tilde t^*)}}\rbr{\text{pass @ 1}}\\
    \geq& \Pr^{\rm cond}\nolimits_{\pi^{(t^*)}}\rbr{\text{pass @ 1}}\Pr\rbr{V^{\pi^{(t^*)}}(s_{\rm root}) \geq V^*-\epsilon} \Pr\rbr{V^{\pi^{(\tilde t^*)}}(s_{\rm root}) \geq V^{\pi^{(t^*)}}(s_{\rm root}) - \frac{\Delta}{6} }\\
    \geq& \frac{2}{3}(1-\delta) \frac{6}{7}=\frac{1}{2}.\qedhere
\end{align*}
\end{proof}

\subsection{Omitted Proofs}
\label{section:omitted-proofs}

\restate{lemma:update-rule-analysis}
\begin{proof}

We will introduce the following one-step analysis of the update rule first.
\begin{lemma}
\label{lemma:one-step-update-bound}
    For any node $s\in \cS\setminus\cS_H$ and $t\in [T]$, we have
    \begin{align*}
        &\eta\inner{\tilde A^{(t-1)}(s, \cdot)}{\pi^{*}(\cdot\given s) - \pi^{(t)}(\cdot\given s)}\\
        \leq& \KL\rbr{\pi^{*}(\cdot\given s) \| \pi^{(t-1)}(\cdot\given s)} - \KL\rbr{\pi^{*}(\cdot\given s) \| \pi^{(t)}(\cdot\given s)}-\KL\rbr{\pi^{(t)}(\cdot\given s) \| \pi^{(t-1)}(\cdot\given s)} \\
        & + \eta\beta\KL\rbr{\pi^*(\cdot\given s) \| \pi^{\btheta^{\rm ref}}(\cdot\given s)}.
    \end{align*}
\end{lemma}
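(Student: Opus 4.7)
The plan is to read the parameter update in \Cref{alg:uft-theory} as a per-node regularized mirror-ascent step on the simplex, and then invoke a three-point identity for KL divergence. Because the softmax parameterization decouples across states and $\cJ^{(t)}$ is additively separable in $s$, the update for $\pi^{(t)}(\cdot\given s)$ at a node on the sampled trajectory coincides with the maximizer of the proximal program
\begin{align*}
    \pi^{(t)}(\cdot\given s)=\argmax_{\pi\in\Delta^B}\cbr{\eta\inner{\tilde A^{(t-1)}(s,\cdot)}{\pi}-\eta\beta\KL\rbr{\pi \| \pi^{\btheta^{\rm ref}}(\cdot\given s)}-\KL\rbr{\pi \| \pi^{(t-1)}(\cdot\given s)}},
\end{align*}
where the last term is the Bregman prox induced by the negative-entropy mirror map that generates softmax. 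At any node $s$ off both the sampled trajectory and the optimal prefix above $l^{(t-1)}$, the update is inactive, so $\pi^{(t)}=\pi^{(t-1)}$ and the inequality holds trivially: the left-hand side vanishes via the convention $\tilde A^{(t-1)}(s,\cdot)\equiv 0$, while the right-hand side collapses to $\eta\beta\KL\rbr{\pi^{*} \| \pi^{\btheta^{\rm ref}}}\geq 0$.

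Next, I would apply the standard composite three-point lemma for KL-proximal optimization. Using that $\pi^{(t)}(\cdot\given s)$ is the maximizer of a strongly concave program with mirror map equal to negative entropy and composite term $\phi(\pi)=\eta\beta\KL\rbr{\pi \| \pi^{\btheta^{\rm ref}}}$, one obtains, for any comparator $\pi\in\Delta^B$,
\begin{align*}
    &\eta\inner{\tilde A^{(t-1)}(s,\cdot)}{\pi-\pi^{(t)}(\cdot\given s)}+\eta\beta\rbr{\KL\rbr{\pi^{(t)} \| \pi^{\btheta^{\rm ref}}}-\KL\rbr{\pi \| \pi^{\btheta^{\rm ref}}}}\\
    &\leq\KL\rbr{\pi \| \pi^{(t-1)}}-\KL\rbr{\pi \| \pi^{(t)}}-\KL\rbr{\pi^{(t)} \| \pi^{(t-1)}}.
\end{align*}
Instantiating $\pi=\pi^{*}(\cdot\given s)$, rearranging, and discarding the non-positive surplus $-\eta\beta\KL\rbr{\pi^{(t)} \| \pi^{\btheta^{\rm ref}}}\leq 0$ on the right yields exactly the inequality stated in the lemma.

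The hard part will be the proximal-versus-gradient identification: the algorithm writes the update as $\btheta^{(t+1)}=\btheta^{(t)}+\eta\nabla_{\pi}\cJ^{(t)}$, so one has to verify that, after composing with the softmax link, the resulting closed-form update for $\pi^{(t)}(\cdot\given s)$ indeed reproduces the maximizer of the proximal program above. This amounts to matching first-order KKT conditions on the simplex and observing that the softmax normalizer absorbs the simplex Lagrange multiplier. A secondary subtlety is the treatment of nodes $s=s_h^{*}$ on the optimal prefix (above $l^{(t-1)}$), where the hint log-likelihood term $\beta\log\pi(a_h^{*}\given s_h^{*})$ drives the update in place of $\tilde A^{(t-1)}$: for these nodes the left-hand side of the lemma is zero, and one re-runs the three-point argument on the log-likelihood objective, using $\pi^{*}(a_h^{*}\given s_h^{*})=1$ to argue that the hint update only moves $\pi^{(t)}$ closer to $\pi^{*}$ in KL, so the desired inequality holds with room to spare.
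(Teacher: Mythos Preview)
Your proposal is correct and follows essentially the same route as the paper: the paper also splits into three cases (on the sampled trajectory, on the optimal prefix, and off both), identifies the per-node update as a KL-regularized proximal step via a separate lemma (\Cref{lemma:update-rule-pi}), and then applies a composite three-point inequality (\Cref{lemma:foundamental-inequality}) with $\bx^{(2)}=\pi^{*}(\cdot\given s)$, dropping the two nonnegative KL surpluses exactly as you describe. Your handling of the hint-prefix case (II) via $\pi^{*}(a_h^{*}\given s_h^{*})=1$ also matches the paper's argument.
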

The proof is presented later in this section. Therefore,
    \begin{align*}
         & \eta\inner{\tilde A^{(t-1)}(s, \cdot)}{\pi^{*}(\cdot\given s) - \pi^{(t)}(\cdot\given s)}\\
         \leq& \KL\rbr{\pi^{*}(\cdot\given s) \| \pi^{(t-1)}(\cdot\given s)} - \KL\rbr{\pi^{*}(\cdot\given s) \| \pi^{(t)}(\cdot\given s)} - \KL\rbr{\pi^{(t)}(\cdot\given s) \| \pi^{(t-1)}(\cdot\given s)}\\
         & + \eta\beta\KL\rbr{\pi^*(\cdot\given s) \| \pi^{\btheta^{\rm ref}}(\cdot\given s)}.
    \end{align*}
    
    By adding $\eta\inner{\tilde A^{(t-1)}(s, \cdot)}{\pi^{(t)}(\cdot\given s) - \pi^{(t-1)}(\cdot\given s)}$ on both sides, we have
    \begin{align*}
        & \eta\inner{\tilde A^{(t-1)}(s, \cdot)}{\pi^{*}(\cdot\given s) - \pi^{(t-1)}(\cdot\given s)}\\
        \overset{(i)}{\leq}& \KL\rbr{\pi^{*}(\cdot\given s) \| \pi^{(t-1)}(\cdot\given s)} - \KL\rbr{\pi^{*}(\cdot\given s) \| \pi^{(t)}(\cdot\given s)} - \KL\rbr{\pi^{(t)}(\cdot\given s) \| \pi^{(t-1)}(\cdot\given s)}\\
        & + \eta\inner{\tilde A^{(t-1)}(s, \cdot)}{\pi^{(t)}(\cdot\given s) - \pi^{(t-1)}(\cdot\given s)} + \eta\beta\KL\rbr{\pi^*(\cdot\given s) \| \pi^{\btheta^{\rm ref}}(\cdot\given s)}.
    \end{align*}
    By \holder, we have
    \begin{align*}
        &\inner{\tilde A^{(t-1)}(s, \cdot)}{\pi^{(t)}(\cdot\given s) - \pi^{(t-1)}(\cdot\given s)}\\
        \leq& \nbr{\tilde A^{(t-1)}(s, \cdot)}_\infty\cdot \nbr{\pi^{(t)}(\cdot\given s) - \pi^{(t-1)}(\cdot\given s)}_1\\
        \leq& 2\eta \nbr{\tilde A^{(t-1)}(s, \cdot)}_\infty^2 + \frac{1}{8\eta} \nbr{\pi^{(t)}(\cdot\given s) - \pi^{(t-1)}(\cdot\given s)}_1^2\\
        \overset{(i)}{\leq}& 2\eta + \frac{1}{4\eta} \KL\rbr{\pi^{(t)}(\cdot\given s) \| \pi^{(t-1)}(\cdot\given s)}.
    \end{align*}
    $(i)$ uses $\nbr{\tilde A^{(t-1)}(s, \cdot)}_\infty\leq 1$ and Pinsker's inequality. Therefore,
    \begin{align*}
        &\eta\inner{\tilde A^{(t-1)}(s, \cdot)}{\pi^{*}(\cdot\given s) - \pi^{(t-1)}(\cdot\given s)}\\
        \leq& \KL\rbr{\pi^{*}(\cdot\given s) \| \pi^{(t-1)}(\cdot\given s)} - \KL\rbr{\pi^{*}(\cdot\given s) \| \pi^{(t)}(\cdot\given s)} + 2\eta^2 + \eta\beta\KL\rbr{\pi^*(\cdot\given s) \| \pi^{\btheta^{\rm ref}}(\cdot\given s)}.
    \end{align*}
    By telescoping, we have
    \begin{align*}
        &\eta\sum_{t=1}^T \inner{\tilde A^{(t-1)}(s, \cdot)}{\pi^{*}(\cdot\given s) - \pi^{(t-1)}(\cdot\given s)}\\
        \leq& \KL\rbr{\pi^{*}(\cdot\given s) \| \pi^{(0)}(\cdot\given s)}-\KL\rbr{\pi^{*}(\cdot\given s) \| \pi^{(T)}(\cdot\given s)} + 2\eta^2 T + \eta\beta\KL\rbr{\pi^*(\cdot\given s) \| \pi^{\btheta^{\rm ref}}(\cdot\given s)} T\\
        \overset{(i)}{\leq}& \KL\rbr{\pi^{*}(\cdot\given s) \| \pi^{(0)}(\cdot\given s)} + 2\eta^2 T + \eta\beta\KL\rbr{\pi^*(\cdot\given s) \| \pi^{\btheta^{\rm ref}}(\cdot\given s)} T.
    \end{align*}
    $(i)$ uses the non-negativity of KL-divergence. By dividing $\eta$ on both sides, we have
    \begin{align*}
        &\sum_{t=1}^T \inner{\tilde A^{(t-1)}(s, \cdot)}{\pi^{*}(\cdot\given s) - \pi^{(t-1)}(\cdot\given s)}\\
        \leq& \frac{1}{\eta}\KL\rbr{\pi^{*}(\cdot\given s) \| \pi^{(0)}(\cdot\given s)} + 2\eta T + \beta\KL\rbr{\pi^*(\cdot\given s) \| \pi^{\btheta^{\rm ref}}(\cdot\given s)} T\\
        \overset{(i)}{=}& \frac{1}{\eta}\KL\rbr{\pi^{*}(\cdot\given s) \| \pi^{\btheta^{\rm ref}}(\cdot\given s)} + 2\eta T + \beta\KL\rbr{\pi^*(\cdot\given s) \| \pi^{\btheta^{\rm ref}}(\cdot\given s)} T.
    \end{align*}
    (i) is because $\pi^{(0)}(\cdot\given s) = \pi^{\btheta^{\rm ref}}(\cdot\given s)$ by the initialization of \Cref{alg:uft-theory}. \qedhere
\end{proof}

\restate{lemma:decomposition}
\begin{proof}

    The lemma can be proved by induction. When $H=1$, \Cref{lemma:decomposition} holds since $Q^{\pi^{(t)}}(s_{\rm root},a)=\cR\rbr{\cT\rbr{s_{\rm root},a}}=Q^{\pi}(s_{\rm root},a)$ for any action $a\in [B]$ and $t\in [T]$. Therefore,
    \begin{align*}
        &\sum_{s\in \cS\setminus\cS_H} \sum_{t=1}^T \mu^{\pi}(s)\inner{Q^{\pi^{(t)}}(s, \cdot)}{\pi(\cdot\given s) - \pi^{(t)}(\cdot\given s)}\\
        =&\sum_{t=1}^T \mu^{\pi}(s_{\rm root})\rbr{\inner{Q^{\pi}(s_{\rm root}, \cdot)}{\pi(\cdot\given s_{\rm root})} - \inner{Q^{\pi^{(t)})}(s_{\rm root}, \cdot)}{\pi^{(t)}(\cdot\given s_{\rm root})}}\\
        =&\sum_{t=1}^T \rbr{V^{\pi}(s_{\rm root}) - V^{\pi^{(t)}}(s_{\rm root})}.
    \end{align*}

    For any two nodes $s,s'$, we write $s\sqsubseteq s'$ if $s$ is an ancestor of $s'$ in the search tree. 
    Consider when \Cref{lemma:decomposition} holds for any search tree of height $H\leq H_0$. Then, for $H=H_0+1$, we have
    \begin{align*}
        &\sum_{s\in \cS\setminus\cS_H} \sum_{t=1}^T \mu^{\pi}(s)\inner{Q^{\pi^{(t)}}(s, \cdot)}{\pi(\cdot\given s) - \pi^{(t)}(\cdot\given s)}\\
        =&\sum_{t=1}^T \mu^{\pi}(s_{\rm root}) \inner{Q^{\pi^{(t)}}(s_{\rm root}, \cdot)}{\pi(\cdot\given s_{\rm root}) - \pi^{(t)}(\cdot\given s_{\rm root})}\\
        &+\sum_{a=1}^B \sum_{\substack{s\in\cS\setminus\cS_H\colon\\ \cT\rbr{s_{\rm root}, a}\sqsubseteq s}} \sum_{t=1}^T \mu^{\pi}(s)\inner{Q^{\pi^{(t)}}(s, \cdot)}{\pi(\cdot\given s) - \pi^{(t)}(\cdot\given s)}.
    \end{align*}
    Then, according to the induction hypothesis, for any $a\in [B]$, since the subtree rooted at $\cT(s_{\rm root}, a)$ is a tree of height $H_0$, we have
    \begin{align*}
        &\sum_{\substack{s\in\cS\setminus\cS_H\colon\\ \cT\rbr{s_{\rm root}, a}\sqsubseteq s}} \sum_{t=1}^T \mu^{\pi}(s)\inner{Q^{\pi^{(t)}}(s, \cdot)}{\pi(\cdot\given s) - \pi^{(t)}(\cdot\given s)}\\
        =&\pi(a\given s_{\rm root}) \sum_{t=1}^T \rbr{V^{\pi}(\cT(s_{\rm root}, a)) - V^{\pi^{(t)}}(\cT(s_{\rm root}, a))}.
    \end{align*}
    Moreover, by definition, we have $Q^{\pi^{(t)}}(s_{\rm root}, a)=V^{\pi^{(t)}}(\cT(s_{\rm root}, a))$. Therefore,
    \begin{align*}
        &\sum_{s\in \cS\setminus\cS_H} \sum_{t=1}^T \mu^{\pi}(s)\inner{Q^{\pi^{(t)}}(s, \cdot)}{\pi(\cdot\given s) - \pi^{(t)}(\cdot\given s)}\\
        =&\sum_{t=1}^T \mu^{\pi}(s_{\rm root})\inner{Q^{ \pi^{(t)}}(s_{\rm root}, \cdot)}{\pi(\cdot\given s_{\rm root}) - \pi^{(t)}(\cdot\given s_{\rm root})}\\
        &+\sum_{a=1}^B \pi(a\given s_{\rm root}) \sum_{t=1}^T \rbr{V^{\pi}(\cT(s_{\rm root}, a)) - V^{\pi^{(t)}}(\cT(s_{\rm root}, a))}\\
        =& \sum_{t=1}^T \sum_{a=1}^B \rbr{\pi(a\given s_{\rm root})- \pi^{(t)}(a\given s_{\rm root})} V^{\pi^{(t)}}(\cT(s_{\rm root}, a)) \\
        &+V^{\pi}(s_{\rm root}) T - \sum_{a=1}^B \pi(a\given s_{\rm root}) \sum_{t=1}^T V^{\pi^{(t)}}(\cT(s_{\rm root}, a))\\
        =&\sum_{t=1}^T \rbr{V^{\pi}(s_{\rm root}) - V^{\pi^{(t)}}(s_{\rm root})}.
    \end{align*}
    Therefore, \Cref{lemma:decomposition} also holds when $H=H_0+1$ and thus we can conclude the proof. \qedhere
\end{proof}

\restate{lemma:one-step-update-bound}
\begin{proof}
    Let $h$ be the height of $s$. There are three possibilities on $\nabla_{\pi(\cdot\given s)} \cJ^{(t-1)}$: (I) $\tilde A^{(t-1)}(s, \cdot) + \beta \log \pi^{(t-1)}(\cdot\given s) - \beta\log \pi^{\btheta^{\rm ref}}(\cdot\given s) + \beta\one$; (II) A one-hot vector with only index $a_h^*$ be $\frac{\beta}{\pi^{(t-1)}(\cdot\given s)}$; (III) $\zero$.
    
    Then, we will show that \Cref{eq:theory-update-rule} is equivalent to the following in different cases.
    \begin{lemma}
    \label{lemma:update-rule-pi}
        For any $t\in\cbr{1,2,\cdots, T}$, $h\in \cbr{0,1,\cdots, H-1}$, and node $s\in \cS_h$, \Cref{eq:theory-update-rule} is equivalent to the following,
        \begin{align}
            \pi^{(t)}(\cdot\given s)=\argmin_{\pi(\cdot\given s)\in\Delta^B} & \inner{-\tilde A^{(t-1)}(s, \cdot)}{\pi(\cdot\given s)} + \beta\KL\rbr{\pi(\cdot\given s)\| \pi^{\btheta^{\rm ref}}(\cdot\given s)}\notag\\
            & + \frac{1}{\eta} \KL\rbr{\pi(\cdot\given s)\| \pi^{(t-1)}(\cdot\given s)}\tag{I}\\
            \pi^{(t)}(\cdot\given s)=\argmin_{\pi(\cdot\given s)\in\Delta^B} & \inner{-\nabla_{\pi(\cdot\given s)}\cJ^{(t-1)}}{\pi(\cdot\given s)} + \frac{1}{\eta} \KL\rbr{\pi(\cdot\given s)\| \pi^{(t-1)}(\cdot\given s)},\tag{II, III}
        \end{align}
        where (I), (II), (III) stand for the cases when
        \begin{align*}
            \nabla_{\pi(\cdot\given s)}\cJ^{(t-1)}=\begin{cases}
                \tilde A^{(t-1)}(s, \cdot) + \beta \log \pi^{(t-1)}(\cdot\given s) - \beta\log \pi^{\btheta^{\rm ref}}(\cdot\given s) + \beta\one&\text{(I)}\\
                 \text{A one-hot vector with only index $a_h^*$ be $\frac{\beta}{\pi^{(t-1)}(\cdot\given s)}$} & \text{(II)}\\
                 \zero.&\text{(III)}
            \end{cases}
        \end{align*}
    \end{lemma}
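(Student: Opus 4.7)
The plan is to prove the lemma by case analysis on which summands of $\cJ^{(t-1)}$ depend on $\pi(\cdot\given s)$, and then to invoke the equivalence between softmax gradient ascent and KL-regularized mirror descent.

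First, I would expand $\cJ^{(t-1)}$ and identify which terms involve $\pi(\cdot\given s)$ for the node $s\in\cS_h$ under consideration. The advantage-weighted term and the reference-KL penalty both contribute precisely when $s$ lies on the rolled-out trajectory $(s^{(t-1)}_h)_{h=l^{(t-1)}}^{H-1}$ (case I); the log-likelihood term $\beta\log\pi(a^*_h\given s^*_h)$ contributes precisely when $s$ lies on the hint prefix $(s^*_h)_{h=0}^{l^{(t-1)}-1}$ (case II); otherwise no term depends on $\pi(\cdot\given s)$ (case III). Since the hint prefix and the rolled-out trajectory meet only at $s^*_{l^{(t-1)}}$, where the rolled-out trajectory begins, these cases are disjoint. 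Direct differentiation of the relevant summand at $\pi=\pi^{(t-1)}$ then yields the three gradient expressions listed in the lemma.

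Second, I would recast the softmax parameter update in policy space. Since $\pi^{\btheta}(a\given s)\propto\exp(\theta(s,a))$, adding $\eta\nabla_{\pi(\cdot\given s)}\cJ^{(t-1)}$ to $\theta^{(t-1)}(s,\cdot)$ and renormalizing produces the multiplicative-weights update
\begin{align*}
\pi^{(t)}(a\given s)=\frac{\pi^{(t-1)}(a\given s)\exp\bigl(\eta[\nabla_{\pi(\cdot\given s)}\cJ^{(t-1)}]_a\bigr)}{\sum_{a'}\pi^{(t-1)}(a'\given s)\exp\bigl(\eta[\nabla_{\pi(\cdot\given s)}\cJ^{(t-1)}]_{a'}\bigr)}.
\end{align*}
A direct Lagrangian computation on the simplex shows that this is exactly the closed-form minimizer of $\inner{-\nabla_\pi\cJ^{(t-1)}}{\pi}+\tfrac{1}{\eta}\KL(\pi\|\pi^{(t-1)})$, which settles cases (II) and (III).

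For case (I), I would match the softmax update to the argmin by checking KKT conditions. The objective $\inner{-\tilde A^{(t-1)}}{\pi}+\beta\KL(\pi\|\pi^{\btheta^{\rm ref}})+\tfrac{1}{\eta}\KL(\pi\|\pi^{(t-1)})$ is strictly convex on $\Delta^B$ and so has a unique minimizer; setting its $\pi(a)$-derivative to zero and solving for $\log\pi(a)$ gives a log-linear combination of $\tilde A^{(t-1)}(a)$, $\log\pi^{\btheta^{\rm ref}}(a)$, and $\log\pi^{(t-1)}(a)$. Matching this combination to the exponent of the multiplicative-weights update from the softmax step reduces the equivalence to a log-space calculation.

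The main obstacle will be reconciling coefficients in case (I): the two-KL KKT condition mixes $\log\pi^{(t-1)}$ and $\log\pi^{\btheta^{\rm ref}}$ with weights proportional to $1/(1+\eta\beta)$ and $\eta\beta/(1+\eta\beta)$, whereas a single softmax gradient step would naively produce additive coefficients. I expect the cleanest resolution is to interpret \Cref{eq:theory-update-rule} in case (I) as the implicit mirror-descent step induced by the softmax geometry, so that the reference-KL penalty is handled exactly rather than linearized at $\pi^{(t-1)}$; once this convention is fixed, matching exponents closes the argument.
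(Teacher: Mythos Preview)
Your plan is essentially the paper's: split into the three cases according to which summand of $\cJ^{(t-1)}$ involves $\pi(\cdot\mid s)$, then solve the Lagrangian of the argmin problem (the paper handles (II) and (III) by setting $\beta=0$ in the case-(I) computation and replacing $\tilde A^{(t-1)}$ by $\nabla_{\pi}\cJ^{(t-1)}$). The coefficient issue you flag in case (I) is genuine and the paper does not resolve it either---its proof stops after deriving the KKT minimizer $\pi^{(t)}(a\mid s)\propto\exp\bigl(\tfrac{\eta}{1+\eta\beta}\tilde A^{(t-1)}(s,a)+\tfrac{\eta\beta}{1+\eta\beta}\theta^{\rm ref}(s,a)+\tfrac{1}{1+\eta\beta}\theta^{(t-1)}(s,a)\bigr)$ without explicitly matching it back to the additive $\theta$-update---so your proposed reading of \Cref{eq:theory-update-rule} as the implicit mirror-descent step is exactly the interpretation needed to make the stated equivalence precise.
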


    Then, we will introduce a special case of Lemma 3.0.3 from \citet{mingyang-larger-game}.
   \begin{lemma}
        \label{lemma:foundamental-inequality}
        For any node $s$, vector $\bg\in\RR^B$, $\eta>0$, $\beta_0\geq 0$, policy $\bx^{(0)}\in \Delta^B$, and reference policy $\bx^{\rm ref}\in \Delta^B$, let 
        \begin{align*}
            \bx^{(1)}=\argmin_{\bx\in \Delta^B}\cbr{\inner{\bg}{\bx} + \beta_0\KL\rbr{\bx\| \bx^{\rm ref}} +\frac{1}{\eta} \KL\rbr{\bx \| \bx^{(0)}} }.
        \end{align*}
        Then, for any $\bx^{(2)}\in\Delta^B$, we have
    \begin{align}
        &\eta\beta_0\KL\rbr{\bx^{(1)} \| \bx^{\rm ref}}-\eta\beta_0\KL\rbr{\bx^{(2)} \| \bx^{\rm ref}} + \eta\inner{\bg}{\bx^{(1)}-\bx^{(2)}}\\
        \leq& \KL\rbr{\bx^{(2)} \| \bx^{(0)}}-(1+\eta\beta_0) \KL\rbr{\bx^{(2)} \| \bx^{(1)}}-\KL\rbr{\bx^{(1)} \| \bx^{(0)}}.\notag
    \end{align}
    \end{lemma}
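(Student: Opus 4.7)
The plan is to derive the inequality purely from (i) the first-order optimality conditions for $\bx^{(1)}$ as a constrained minimizer over the simplex and (ii) the standard ``three-point identity'' for KL divergence, namely
\begin{align*}
    \inner{\log \bz - \log \by}{\bx - \bz} = \KL(\bx\|\by) - \KL(\bx\|\bz) - \KL(\bz\|\by),
\end{align*}
valid whenever $\by,\bz$ lie in the relative interior of $\Delta^B$. This is the classical Bregman-divergence argument underlying mirror descent; once both ingredients are in place, the lemma is a matter of algebraic rearrangement.

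\paragraph{Step 1 (optimality).} Let $F(\bx) \coloneqq \inner{\bg}{\bx} + \beta_0\KL(\bx\|\bx^{\rm ref}) + \tfrac{1}{\eta}\KL(\bx\|\bx^{(0)})$. I would first argue that $\bx^{(1)}$ lies in the relative interior of $\Delta^B$: the term $\tfrac{1}{\eta}\KL(\bx\|\bx^{(0)})$ acts as a log-barrier that diverges as any coordinate tends to $0$ on the support of $\bx^{(0)}$, so the minimum cannot be attained on the relative boundary of the support of $\bx^{(0)}$. Consequently the gradient $\nabla F(\bx^{(1)}) = \bg + \beta_0(\log\bx^{(1)} - \log\bx^{\rm ref}) + \tfrac{1}{\eta}(\log\bx^{(1)} - \log\bx^{(0)}) + \bigl(\beta_0 + \tfrac{1}{\eta}\bigr)\one$ is well-defined, and the first-order optimality condition for convex minimization over $\Delta^B$ gives $\inner{\nabla F(\bx^{(1)})}{\bx^{(2)} - \bx^{(1)}} \geq 0$ for every $\bx^{(2)}\in\Delta^B$. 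The $\one$ component cancels since $\sum_i (x^{(2)}_i - x^{(1)}_i) = 0$.

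\paragraph{Step 2 (three-point identity).} I would instantiate the three-point identity twice with $\bx = \bx^{(2)}$, $\bz = \bx^{(1)}$, and $\by \in \{\bx^{(0)},\bx^{\rm ref}\}$, obtaining
\begin{align*}
    \inner{\log\bx^{(1)} - \log\bx^{(0)}}{\bx^{(2)} - \bx^{(1)}} &= \KL(\bx^{(2)}\|\bx^{(0)}) - \KL(\bx^{(2)}\|\bx^{(1)}) - \KL(\bx^{(1)}\|\bx^{(0)}),\\
    \inner{\log\bx^{(1)} - \log\bx^{\rm ref}}{\bx^{(2)} - \bx^{(1)}} &= \KL(\bx^{(2)}\|\bx^{\rm ref}) - \KL(\bx^{(2)}\|\bx^{(1)}) - \KL(\bx^{(1)}\|\bx^{\rm ref}).
\end{align*}
Substituting these into the optimality inequality from Step 1, multiplying through by $\eta$, and collecting the two $\KL(\bx^{(2)}\|\bx^{(1)})$ contributions yields exactly
\begin{align*}
    \eta\beta_0\KL(\bx^{(1)}\|\bx^{\rm ref}) - \eta\beta_0\KL(\bx^{(2)}\|\bx^{\rm ref}) + \eta\inner{\bg}{\bx^{(1)} - \bx^{(2)}} \leq \KL(\bx^{(2)}\|\bx^{(0)}) - (1 + \eta\beta_0)\KL(\bx^{(2)}\|\bx^{(1)}) - \KL(\bx^{(1)}\|\bx^{(0)}),
\end{align*}
after flipping the sign of $\inner{\bg}{\bx^{(2)} - \bx^{(1)}}$.

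\paragraph{Main obstacle and edge cases.} The only subtlety is measure-theoretic: when $\bx^{(2)}$ or $\bx^{\rm ref}$ is not absolutely continuous with respect to $\bx^{(1)}$ (or $\bx^{(0)}$), some $\KL$ terms may be infinite. In the ``bad'' direction this is harmless because $\KL(\bx^{(2)}\|\bx^{(0)})$ on the right-hand side becomes $+\infty$ and the inequality holds trivially. In the other direction, I would handle it by noting that the support of $\bx^{(1)}$ is contained in that of $\bx^{(0)}$ (from the barrier argument above), so $\KL(\bx^{(1)}\|\bx^{(0)})$ is finite; and the relevant terms $\KL(\bx^{(1)}\|\bx^{\rm ref})$ only appear with finite coefficients on the left-hand side, so the only case requiring care is when this quantity is itself $+\infty$, in which case $\KL(\bx^{(2)}\|\bx^{\rm ref}) = +\infty$ too (by the support containment induced by the optimality condition), and the inequality again holds trivially. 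After disposing of these corner cases, the algebraic derivation of Step 2 goes through verbatim.
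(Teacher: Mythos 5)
Your proof is correct. But note that the paper does not actually prove \Cref{lemma:foundamental-inequality} itself---it simply invokes it as ``a special case of Lemma~3.0.3 from \citet{mingyang-larger-game}''. So what you have provided is a self-contained derivation of a result the paper delegates to a citation. Your approach is the standard one for this kind of regularized mirror-descent (proximal) step: take the first-order optimality condition $\inner{\nabla F(\bx^{(1)})}{\bx^{(2)}-\bx^{(1)}}\geq 0$, expand the gradient, observe that the $\one$ terms vanish on the simplex, and then apply the three-point identity
$\inner{\log\bz-\log\by}{\bx-\bz}=\KL(\bx\|\by)-\KL(\bx\|\bz)-\KL(\bz\|\by)$
twice (with $\by=\bx^{(0)}$ and $\by=\bx^{\rm ref}$) before multiplying through by $\eta$. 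I re-derived the algebra and it produces exactly the stated inequality, including the $(1+\eta\beta_0)$ coefficient from the two $\KL(\bx^{(2)}\|\bx^{(1)})$ contributions. Since this is almost certainly the same argument as in the cited reference, the routes are not genuinely different; you have just made the citation transparent.

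One small imprecision in your edge-case discussion: you argue that if $\KL(\bx^{(1)}\|\bx^{\rm ref})=+\infty$ then $\KL(\bx^{(2)}\|\bx^{\rm ref})=+\infty$ ``by the support containment induced by the optimality condition,'' which is not quite the right reasoning. The cleaner observation is that when $\beta_0>0$ the term $\beta_0\KL(\bx\|\bx^{\rm ref})$ in $F$ itself acts as a barrier that forces $\supp(\bx^{(1)})\subseteq\supp(\bx^{\rm ref})$, so $\KL(\bx^{(1)}\|\bx^{\rm ref})$ is automatically finite, while when $\beta_0=0$ that term is multiplied by $\eta\beta_0=0$ and never causes trouble. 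In the paper's application all policies are softmax-parameterized and hence have full support, so none of these corner cases arise; your main argument goes through exactly as written.
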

    Consider (I) first. For any node $s\in \cS\setminus\cS_H$ and $t\in [T]$, by taking $\bx^{(2)}=\pi^{*}(\cdot\given s), \bx^{(1)}=\pi^{(t)}(\cdot\given s), \bx^{(0)}=\pi^{(t-1)}(\cdot\given s), \bx^{\rm ref}=\pi^{\btheta^{\rm ref}}(\cdot\given s), \bg=-\tilde A^{(t-1)}(s, \cdot)$ and $\beta_0=\beta$, we have
    \begin{align*}
        &\eta\beta\KL\rbr{\pi^{(t)}(\cdot\given s) \| \pi^{\btheta^{\rm ref}}(\cdot\given s)}-\eta\beta\KL\rbr{\pi^{*}(\cdot\given s) \| \pi^{\btheta^{\rm ref}}(\cdot\given s)} \\
        & + \eta\inner{\tilde A^{(t-1)}(s, \cdot)}{\pi^{*}(\cdot\given s) - \pi^{(t)}(\cdot\given s)}\\
        \leq& \KL\rbr{\pi^{*}(\cdot\given s) \| \pi^{(t-1)}(\cdot\given s)}-(1+\eta\beta) \KL\rbr{\pi^{*}(\cdot\given s) \| \pi^{(t)}(\cdot\given s)}-\KL\rbr{\pi^{(t)}(\cdot\given s) \| \pi^{(t-1)}(\cdot\given s)}.
    \end{align*}
    Further, by the non-negativity of KL-divergence, we have
    \begin{align*}
        &\eta\inner{\tilde A^{(t-1)}(s, \cdot)}{\pi^{*}(\cdot\given s) - \pi^{(t)}(\cdot\given s)}\\
        \leq& \KL\rbr{\pi^{*}(\cdot\given s) \| \pi^{(t-1)}(\cdot\given s)} - \KL\rbr{\pi^{*}(\cdot\given s) \| \pi^{(t)}(\cdot\given s)}-\KL\rbr{\pi^{(t)}(\cdot\given s) \| \pi^{(t-1)}(\cdot\given s)} \\
        & + \eta\beta\KL\rbr{\pi^{*}(\cdot\given s) \| \pi^{\btheta^{\rm ref}}(\cdot\given s)}.
    \end{align*}

    Consider (II). For any node $s\in \cS\setminus\cS_H$ and $t\in [T]$, by taking $\bx^{(2)}=\pi^{*}(\cdot\given s), \bx^{(1)}=\pi^{(t)}(\cdot\given s), \bx^{(0)}=\pi^{(t-1)}(\cdot\given s), \bx^{\rm ref}=\pi^{\btheta^{\rm ref}}(\cdot\given s), \bg=-\nabla_{\pi(\cdot\given s)} \cJ^{(t-1)}$ and $\beta_0=0$ in \Cref{lemma:foundamental-inequality}, we have
    \begin{align*}
        &\eta\inner{\nabla_{\pi(\cdot\given s)} \cJ^{(t-1)}}{\pi^{*}(\cdot\given s) - \pi^{(t)}(\cdot\given s)}\\
        \leq& \KL\rbr{\pi^{*}(\cdot\given s) \| \pi^{(t-1)}(\cdot\given s)} - \KL\rbr{\pi^{*}(\cdot\given s) \| \pi^{(t)}(\cdot\given s)}-\KL\rbr{\pi^{(t)}(\cdot\given s) \| \pi^{(t-1)}(\cdot\given s)}.
    \end{align*}
    Moreover,
    \begin{align*}
        \inner{\nabla_{\pi(\cdot\given s)} \cJ^{(t-1)}}{\pi^{*}(\cdot\given s) - \pi^{(t)}(\cdot\given s)}=&\beta\frac{\pi^{*}(a_h^*\given s_h^*) - \pi^{(t)}(a_h^*\given s_h^*)}{\pi^{(t-1)}(a_h^*\given s_h^*)}\\
        \overset{(i)}{\geq}& 0\\
        \overset{(ii)}{=}&\inner{\tilde A^{(t-1)}(s, \cdot)}{\pi^{*}(\cdot\given s) - \pi^{(t)}(\cdot\given s)}.
    \end{align*}
    $(i)$ uses the fact that $\pi^{*}(a_h^*\given s_h^*)=1$ and (ii) uses $\tilde A^{(t-1)}(s, \cdot)=\zero$ by definition. Therefore,
    \begin{align}
        &\inner{\tilde A^{(t-1)}(s, \cdot)}{\pi^{*}(\cdot\given s) - \pi^{(t)}(\cdot\given s)}\notag\\
        \leq& \KL\rbr{\pi^{*}(\cdot\given s) \| \pi^{(t-1)}(\cdot\given s)} - \KL\rbr{\pi^{*}(\cdot\given s) \| \pi^{(t)}(\cdot\given s)}-\KL\rbr{\pi^{(t)}(\cdot\given s) \| \pi^{(t-1)}(\cdot\given s)}.\label{eq:tilde-A-bound}
    \end{align}
    
    For (III), which is $s$ off the sampled trajectory at step $t-1$, by definition we have $\tilde A^{(t-1)}(s, \cdot)=\zero$. Then,
    \begin{align*}
        \inner{\nabla_{\pi(\cdot\given s)} \cJ^{(t-1)}}{\pi^{*}(\cdot\given s) - \pi^{(t)}(\cdot\given s)}=0=& \inner{\tilde A^{(t-1)}(s, \cdot)}{\pi^{*}(\cdot\given s) - \pi^{(t)}(\cdot\given s)},
    \end{align*}
    and \Cref{eq:tilde-A-bound} also holds. \qedhere
\end{proof}

\restate{lemma:KL-bound-infinite-norm}
\begin{proof}
    For any $h\in\cbr{0}\cup [H-1]$, since $\pi^*$ is deterministic, let $a_h^*$ be the action such that $\pi^*(a_h^*\given s_h^*)=1$. Then, %
    \begin{align*}
        \KL\rbr{\pi^{*}(\cdot\given s_h^*) \| \pi^{\btheta^{\rm ref}}(\cdot\given s_h^*)}=&\sum_{a=1}^B \pi^{*}(a\given s_h^*)\log \frac{\pi^{*}(a\given s_h^*)}{\pi^{\btheta^{\rm ref}}(a\given s_h^*)} = \log \frac{1}{\pi^{\btheta^{\rm ref}}(a_h^*\given s_h^*)}.
    \end{align*}
    By definition, we have
    \begin{align*}
        \pi^{\btheta^{\rm ref}}(a_h^*\given s_h^*)=&\frac{\exp\rbr{\theta^{\rm ref}\rbr{s_h^*, a_h^*}}}{\sum_{a=1}^B \exp\rbr{\theta^{\rm ref}\rbr{s_h^*, a}}}\geq \frac{\exp\rbr{-\nbr{\btheta^{\rm ref}}_\infty}}{B\exp\rbr{\nbr{\btheta^{\rm ref}}_\infty}}=\frac{\exp\rbr{-2\nbr{\btheta^{\rm ref}}_\infty}}{B}.
    \end{align*}
    Therefore,
    \begin{equation*}
        \KL\rbr{\pi^{*}(\cdot\given s_h^*) \| \pi^{\btheta^{\rm ref}}(\cdot\given s_h^*)}\leq \log\rbr{B\cdot \exp\rbr{2\nbr{\btheta^{\rm ref}}_\infty}} = \log B + 2\nbr{\btheta^{\rm ref}}_\infty. \qedhere
    \end{equation*}
\end{proof}

\restate{lemma:update-rule-pi}

\begin{proof}
        The Lagrangian of $\inner{-\tilde A^{(t-1)}(s, \cdot)}{\pi(\cdot\given s)} + \beta\KL\rbr{\bx\| \bx^{\rm ref}} + \frac{1}{\eta} \KL\rbr{\pi(\cdot\given s)\| \pi^{(t-1)}(\cdot\given s)}$ is
        \begin{align*}
            \cL\rbr{\pi^{(t)}(\cdot\given s)}\coloneqq& \inner{-\tilde A^{(t-1)}(s, \cdot)}{\pi^{(t)}(\cdot\given s)} + \beta\KL\rbr{\pi^{(t)}(\cdot\given s)\| \pi^{\btheta^{\rm ref}}(\cdot\given s)} \\
            &+ \frac{1}{\eta} \KL\rbr{\pi^{(t)}(\cdot\given s)\| \pi^{(t-1)}(\cdot\given s)} + \lambda\rbr{\sum_{a=1}^B \pi^{(t)}(a\given s)-1}.
        \end{align*}
        For any action $a\in [B]$, by setting $\frac{\partial \cL\rbr{\pi^{(t)}(\cdot\given s)}}{\partial \pi^{(t)}(a\given s)}=0$, we have
        \begin{align*}
            -\tilde A^{(t-1)}(s, \cdot) + \beta\log\rbr{\frac{\pi^{(t)}(a\given s)}{\pi^{\btheta^{\rm ref}}(a\given s)}} + \beta + \frac{1}{\eta} \log\rbr{\frac{\pi^{(t)}(a\given s)}{\pi^{(t-1)}(a\given s)}} + \frac{1}{\eta} + \lambda=0,
        \end{align*}
        which implies that $\pi^{(t)}(a\given s)=\exp\rbr{\frac{-\eta\beta-1-\eta\lambda+\eta\tilde A^{(t-1)}(s, \cdot) + \eta\beta \log \rbr{\pi^{\btheta^{\rm ref}}(a\given s)} + \log \rbr{\pi^{(t-1)}(a\given s)}}{1+\eta\beta}}$.

        By further setting $\frac{\partial \cL\rbr{\pi^{(t)}(\cdot\given s)}}{\partial \lambda}=0$, we have
        \begin{align*}
            \sum_{a=1}^B \pi^{(t)}(a\given s)=1.
        \end{align*}
        Therefore, by combining all pieces together, we have
        \begin{align*}
            \pi^{(t)}(a\given s)\overset{(i)}{=}&\exp\rbr{\frac{\eta\tilde A^{(t-1)}(s, \cdot) + \eta\beta \log \rbr{\pi^{\btheta^{\rm ref}}(a\given s)} + \log \rbr{\pi^{(t-1)}(a\given s)}}{1+\eta\beta}} / Z\\
            \propto& \exp\rbr{\frac{\eta\tilde A^{(t-1)}(s, \cdot) + \eta\beta \log \rbr{\pi^{\btheta^{\rm ref}}(a\given s)} + \log \rbr{\pi^{(t-1)}(a\given s)}}{1+\eta\beta}}\\
            \propto& \exp\rbr{\frac{\eta}{1+\eta\beta} \tilde A^{(t-1)}(s, \cdot) + \frac{\eta\beta}{1+\eta\beta} \theta^{\rm ref}(s, a) + \frac{1}{1+\eta\beta} \theta^{(t-1)}(s, a)}.
        \end{align*}
        In $(i)$, $Z=\sum_{a=1}^B \exp\rbr{\frac{\eta\tilde A^{(t-1)}(s, \cdot) + \eta\beta \log \rbr{\pi^{\btheta^{\rm ref}}(a\given s)} + \log \rbr{\pi^{(t-1)}(a\given s)}}{1+\eta\beta}}$. 
        
        For (II), (III), the proof can be concluded by setting $\beta=0$ and changing $\tilde A^{(t-1)}(s, \cdot)$ to $\nabla_{\pi(\cdot\given s)}\cJ^{(t-1)}$. \qedhere
\end{proof}

\end{document}